\declaretheorem{theorem}
\declaretheorem[sibling=theorem]{lemma}
\declaretheorem[sibling=theorem]{corollary}
\declaretheorem[sibling=theorem]{definition}
\newcommand{\R}{\mathbb{R}}
\newcommand{\E}{\mathop{\mathbb{E}}}
\newcommand{\argmin}{\mathop{\text{argmin}}}
\newcommand{\sign}{\text{sign}}
\newcommand{\Lm}{L_{\max}}
\newcommand{\algname}{\textsc{rescaledexp}}
\newcommand{\scaleinvariant}{\textsc{ScaleInvariant}}
\newcommand{\adadelta}{\textsc{AdaDelta}}
\newcommand{\adagrad}{\textsc{AdaGrad}}
\newcommand{\pistol}{\textsc{PiSTOL}}
\newcommand{\adam}{\textsc{Adam}}
\title{Online Convex Optimization with Unconstrained Domains and Losses}
\author{
  Ashok Cutkosky\\
  Department of Computer Science\\
  Stanford University\\
  \texttt{ashokc@cs.stanford.edu} \\
  \And
  Kwabena Boahen \\
  Department of Bioengineering \\
  Stanford University \\
  \texttt{boahen@stanford.edu} \\
}
\begin{document}

\maketitle

\begin{abstract}
  We propose an online convex optimization algorithm (\algname) that achieves optimal regret in the unconstrained setting without prior knowledge of any bounds on the loss functions.
  We prove a lower bound showing an \emph{exponential} separation between the regret of existing algorithms that require a known bound on the loss functions and any algorithm that does not require such knowledge. \algname\ matches this lower bound asymptotically in the number of iterations. \algname\ is naturally hyperparameter-free and we demonstrate empirically that it matches prior optimization algorithms that require hyperparameter optimization.
\end{abstract}

\section{Online Convex Optimization}
Online Convex Optimization (OCO) \citep{zinkevich2003online,shalev2011online} provides an elegant framework for modeling noisy, antagonistic or changing environments. The problem can be stated formally with the help of the following definitions:
\begin{description}
    \item[Convex Set:] A set $W$ is convex if $W$ is contained in some real vector space and $tw+(1-t)w'\in W$ for all $w,w'\in W$ and $t\in[0,1]$.
    \item[Convex Function:] $f:W\to\R$ is a convex function if $f(tw+(1-t)w')\le tf(w)+(1-t)f(w')$ for all $w,w'\in W$ and $t\in[0,1]$.
\end{description}
An OCO problem is a game of repeated rounds in which on round $t$ a learner first chooses an element $w_t$ in some convex space $W$, then receives a convex loss function $\ell_t$, and suffers loss $\ell_t(w_t)$. The \emph{regret} of the learner with respect to some other $u\in W$ is defined by
\[
R_T(u)=\sum_{t=1}^T \ell_t(w_t)-\ell_t(u)
\]
The objective is to design an algorithm that can achieve low regret with respect to any $u$, even in the face of adversarially chosen $\ell_t$. 

Many practical problems can be formulated as OCO problems. For example, the stochastic optimization problems found widely throughout machine learning have exactly the same form, but with i.i.d. loss functions, a subset of the OCO problems. In this setting the goal is to identify a vector $w_\star$ with low generalization error  ($\E[\ell(w_\star)-\ell(u)]$). We can solve this by running an OCO algorithm for $T$ rounds and setting $w_\star$ to be the average value of $w_t$. By online-to-batch conversion results \citep{littlestone2014line,cesa2004generalization}, the generalization error is bounded by the expectation of the regret over the $\ell_t$ divided by $T$. Thus, OCO algorithms can be used to solve stochastic optimization problems while also performing well in non-i.i.d. settings.

The regret of an OCO problem is upper-bounded by the regret on a corresponding Online Linear Optimization (OLO) problem, in which each $\ell_t$ is further constrained to be a \emph{linear} function: $\ell_t(w)=g_t\cdot w_t$ for some $g_t$. The reduction follows, with the help of one more definition: 
\begin{description}
    \item[Subgradient:] $g\in W$ is a subgradient of $f$ at $w$, denoted $g\in\partial f(w)$, if and only if $f(w)+g\cdot (w'-w)\le f(w')$ for all $w'$. Note that $\partial f(w)\ne \emptyset$ if $f$ is convex.\footnote{In full generality, a subgradient is an element of the dual space $W^*$. However, we will only consider cases where the subgradient is naturally identified with an element in the original space $W$ (e.g. $W$ is finite dimensional) so that the definition in terms of dot-products suffices.}
\end{description}
To reduce OCO to OLO, suppose $g_t\in \partial \ell_t(w_t)$, and consider replacing $\ell_t(w)$ with the linear approximation $g_t\cdot w$. Then using the definition of subgradient,
\[
R_T(u)=\sum_{t=1}^T \ell_t(w_t)-\ell_t(u)\le \sum_{t=1}^T g_t(w_t-u)=\sum_{t=1}^T g_tw_t-g_tu
\]
so that replacing $\ell_t(w)$ with $g_t\cdot w$ can only make the problem more difficult. All of the analysis in this paper therefore addresses OLO, accessing convex losses functions only through subgradients.

There are two major factors that influence the regret of OLO algorithms: the size of the space $W$ and the size of the subgradients $g_t$. When $W$ is a bounded set (the ``constrained'' case), then given $B=\max_{w\in W}\|w\|$, there exist OLO algorithms \citep{duchi10adagrad,mcmahan2010adaptive} that can achieve $R_T(u)\le O\left(B\Lm\sqrt{T}\right)$ without knowing $\Lm = \max_t \|g_t\|$. When $W$ is unbounded (the ``unconstrained'' case), then given $\Lm$, there exist algorithms \citep{mcmahan2012no,orabona2013dimension,mcmahan2013minimax} that achieve $
R_T(u)\le \tilde O(\|u\|\log(\|u\|)\Lm\sqrt{T})$ or $R_t(u)\le \tilde O(\|u\|\sqrt{\log(\|u\|)}\Lm \sqrt{T})$,
where $\tilde O$ hides factors that depend logarithmically on $\Lm$ and $T$. These algorithms are known to be optimal (up to constants) for their respective regimes \citep{abernethy2008optimal,mcmahan2012no}. All algorithms for the unconstrained setting to-date require knowledge of $\Lm$ to achieve these optimal bounds.\footnote{There are algorithms that do not require $\Lm$, but achieve only regret $O(\|u\|^2)$ \citep{orabona2016scale}} Thus a natural question is: \emph{can we achieve $O(\|u\|\log(\|u\|))$ regret in the unconstrained, unknown-$\Lm$ setting?} This problem has been posed as a COLT 2016 open problem \citep{orabona2016open}, and is solved in this paper.

A simple approach is to maintain an estimate of $\Lm$ and double it whenever we see a new $g_t$ that violates the assumed bound (the so-called ``doubling trick''), thereby turning a known-$\Lm$ algorithm into an unknown-$\Lm$ algorithm. This strategy fails for previous known-$\Lm$ algorithms because their analysis makes strong use of the assumption that each and every $\|g_t\|$ is bounded by $\Lm$. The existence of even a small number of bound-violating $g_t$ can throw off the entire analysis.

In this paper, we prove that it is actually impossible to achieve regret $O\left(\|u\|\log(\|u\|)\Lm\sqrt{T}+\Lm\exp\left[\left(\max_{t}\tfrac{\|g_t\|}{L(t)}\right)^{1/2-\epsilon}\right]\right)$ for any $\epsilon>0$ where $\Lm$ and $L(t)=\max_{t'<t} \|g_{t'}\|$ are unknown in advance (Section \ref{sec:lowerbound}). This immediately rules out the ``ideal'' bound of $\tilde O(\|u\|\sqrt{\log(\|u\|)}\Lm\sqrt{T})$ which is possible in the known-$\Lm$ case. Secondly, we provide an algorithm, \algname, that matches our lower bound without prior knowledge of $\Lm$, leading to a naturally hyperparameter-free algorithm (Section \ref{sec:algname}). To our knowledge, this is the first algorithm to address the unknown-$\Lm$ issue while maintaining $O(\|u\|\log \|u\|)$ dependence on $u$. Finally, we present empirical results showing that \algname\ performs well in practice (Section \ref{sec:experiments}).

\section{Lower Bound with Unknown $\Lm$}\label{sec:lowerbound}

The following theorem rules out algorithms that achieve regret $O(u\log(u)\Lm\sqrt{T})$ without prior knowledge of $\Lm$. In fact, any such algorithm must pay an up-front penalty that is \emph{exponential} in $T$. This lower bound resolves a COLT 2016 open problem (Parameter-Free and Scale-Free Online Algorithms) \citep{orabona2016open} in the negative.

\begin{theorem}\label{thm:lowerbound}
For any constants $c,k,\epsilon>0$, there exists a $T$ and an adversarial strategy picking $g_t\in \R$ in response to $w_{t}\in \R$ such that regret is:
\begin{align*}
R_T(u)&=\sum_{t=1}^T g_tw_t-g_tu\\
&\ge (k+c\|u\|\log \|u\|) \Lm\sqrt{T}\log(\Lm+1)+k\Lm\exp((2T)^{1/2-\epsilon})\\
&\ge (k+c\|u\|\log \|u\|) \Lm\sqrt{T}\log(\Lm+1) + k\Lm\exp\left[\left(\max_{t}\frac{\|g_t\|}{L(t)}\right)^{1/2-\epsilon}\right]
\end{align*}
for some $u\in \R$ where $\Lm=\max_{t\le T} \|g_t\|$ and $L(t)=\max_{t'<t} \|g_{t'}\|$.
\end{theorem}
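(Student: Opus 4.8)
The plan is to argue through the regret--reward (Fenchel) duality, exploiting that a comparator dependence of the form $\|u\|\log\|u\|$ is dual to an \emph{exponentially} large guaranteed reward. Since $g_t\in\R$, I would first rewrite the regret against any $u$ using only the two scalar statistics $A=\sum_t g_tw_t$ and $G=\sum_t g_t$, namely $R_T(u)=A-uG$. The quantity $-A=\sum_t(-g_t)w_t$ is the algorithm's \emph{reward}, and the elementary fact is that a uniform guarantee $R_T(u)\le\psi(u)$ for all $u$ is equivalent to the reward bound $-A\ge\psi^\star(-G)$, where $\psi^\star$ is the Fenchel conjugate. For the target shape $\psi(u)\approx c\|u\|\log\|u\|\cdot a$ with scale $a=\Lm\sqrt{T}\log(\Lm+1)$, a direct computation gives $\psi^\star(\theta)\approx a\exp(|\theta|/a)$. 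Thus any algorithm meeting the claimed bound must, whenever $|G|/a$ is large, have placed bets $w_t$ whose reward is exponential in $|G|/a$; this is the lever that converts good parameter-free regret into a large, exploitable position.

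Next I would design an adaptive adversary. It plays a constant-sign sequence of unit-magnitude gradients, say $g_t=-1$, for about $T$ rounds, so that $-G\approx T$ grows linearly while $\Lm$ stays controlled. By the duality above, the algorithm's reward is forced to be at least $a\exp(\Theta(T/a))$, and since the reward is $\sum_t w_t$ over unit gradients, some round $\tau$ must carry a bet $w_\tau$ that is itself exponentially large. At that round the adversary springs a single \emph{surprise} gradient of magnitude $M$ with the same sign as $w_\tau$; this round contributes $Mw_\tau$ to $A=R_T(0)$, so the regret against $u=0$ is immediately at least $M\cdot w_\tau$, which is exponential. Because this already dominates the polynomial prefactor $k\Lm\sqrt{T}\log(\Lm+1)$, the comparator $u=0$ alone makes the regret exceed the entire right-hand side; a nonzero comparator can be substituted via $R_T(u)=A-uG$ and the linear growth of $G$ if one wishes the $c\|u\|\log\|u\|$ factor to be active.

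The delicate point — and the reason the exponent degrades from the $1/2$ achievable with known $\Lm$ to $1/2-\epsilon$ — is the choice of the surprise magnitude $M$. Because $\Lm=\max_t\|g_t\|=M$ enters the scale $a=cM\sqrt{T}\log(M+1)$ of the very bound we are forcing, taking $M$ too large inflates $a$ and destroys the exponential forcing $\exp(\Theta(T/a))$, while taking $M$ too small fails to convert the bet $w_\tau$ into enough loss. Balancing these leads to $M\approx T^{\epsilon}/\log T$, which keeps $T/a\approx T^{1/2-\epsilon}$ and, crucially, keeps the surprise ratio $\max_t\|g_t\|/L(t)=M$ below $2T$, so that $\exp(\Theta(T^{1/2-\epsilon}))$ both dominates the allowed penalty and upper-bounds the ratio-based penalty $\exp((\max_t\|g_t\|/L(t))^{1/2-\epsilon})$ in the final displayed inequality.

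I expect the main obstacle to be making the ``forced large bet'' rigorous and uniform in the constants. The reward lower bound must be applied at the \emph{adaptively chosen} round $\tau$, so I would phrase the adversary as: play $-1$ until $w_t$ first exceeds an exponential threshold $\theta$, then spike; if the threshold is never reached, the reward is capped by $T\theta$ while the duality demands reward $\ge a\exp(\Theta(T/a))\gg T\theta$, a contradiction that itself forces large regret against a suitable $u>0$. Turning this clean dichotomy into the stated bound \emph{for every} $c,k,\epsilon>0$ — rather than only for sufficiently small $c$ — is where care is needed; I anticipate having to iterate the force-and-exploit step across geometrically growing scales so that the accumulated regret beats the polynomial prefactor regardless of how generous $c$ is, all while holding the surprise ratio below $2T$.
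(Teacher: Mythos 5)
Your overall skeleton is the same as the paper's: play $g_t=-1$ until $w_t$ crosses an exponential threshold $\theta\approx\exp(\Theta(\sqrt{T}/c))$ (dual, as you say, to the $c\|u\|\log\|u\|\sqrt{T}$ budget), take $u$ exponentially large if the threshold is never crossed, and spike once with a huge same-sign gradient and compare to $u=0$ if it is. The duality framing is a clean way to motivate the threshold, and your final ``dichotomy'' paragraph is essentially the paper's proof. However, there is a concrete gap in your Case 2. You assert that the spike round contributes $Mw_\tau$ and hence ``the regret against $u=0$ is immediately at least $M\cdot w_\tau$.'' That is false: $R_T(0)=\sum_t g_tw_t$ also includes the pre-spike rounds, which contribute $-\sum_{t<\tau}w_t$, and an adversary-aware learner can hover with $w_t$ just below the threshold $\theta$ for all $T$ rounds, making that sum as negative as $-(T-1)\theta$. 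The spike therefore only yields $R_T(0)\ge M\theta-(T-1)\theta$, which forces $M=\Omega(T)$; your proposed $M\approx T^{\epsilon}/\log T$ gives negative regret against $u=0$ in this scenario and the argument collapses. The paper takes $M=2T$ for exactly this reason, obtaining $R_T(0)\ge\tfrac{T}{2}\theta$.

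Relatedly, the ``tension'' you invoke to justify a small $M$ — that a large spike inflates $\Lm$ and hence the scale $a=c\Lm\sqrt{T}\log(\Lm+1)$ of the bound being forced — does not exist, because the spike only occurs in the branch where the comparator is $u=0$, so the $c\|u\|\log\|u\|$ term vanishes and one only needs $\tfrac{T}{2}\exp(\Theta(\sqrt{T}/c))$ to dominate $k\Lm\sqrt{T}\log(\Lm+1)+k\Lm\exp((2T)^{1/2-\epsilon})$ with $\Lm=2T$, which holds for large $T$ precisely because $\exp(T^{1/2})$ beats $T\exp(T^{1/2-\epsilon})$. That comparison is also the true source of the $1/2-\epsilon$ degradation, not a balance over $M$. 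Finally, your worry about iterating over geometric scales to handle arbitrary $c,k$ is unnecessary: one simply chooses $T$ large enough (depending on $c,k,\epsilon$) that two explicit inequalities hold, as the paper does. With $M=2T$ and the pre-spike contribution accounted for, your argument becomes the paper's proof.
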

\begin{proof}
We prove the theorem by showing that for sufficiently large $T$, the adversary can ``checkmate'' the learner by presenting it only with the subgradient $g_t=-1$. If the learner fails to have $w_t$ increase quickly, then there is a $u\gg1$ against which the learner has high regret. On the other hand, if the learner ever does make $w_t$ higher than a particular threshold, the adversary immediately punishes the learner with a subgradient $g_t=2T$, again resulting in high regret.

Let $T$ be large enough such that both of the following hold: 
\begin{align}
\tfrac{T}{4}\exp(\tfrac{T^{1/2}}{4\log(2)c})&>k\log(2)\sqrt{T}+k\exp((2T)^{1/2-\epsilon})\label{eqn:firstcondition}\\
\tfrac{T}{2}\exp(\tfrac{T^{1/2}}{4\log(2)c})&>2kT\exp((2T)^{1/2-\epsilon})+2kT\sqrt{T}\log(2T+1)\label{eqn:secondcondition}
\end{align}
The adversary plays the following strategy: for all $t\le T$, so long as $w_t<\frac{1}{2}\exp(T^{1/2}/4\log(2)c)$, give $g_t=-1$. As soon as $w_t\ge \frac{1}{2}\exp(T^{1/2}/4\log(2)c)$, give $g_t=2T$ and $g_t=0$ for all subsequent $t$. Let's analyze the regret at time $T$ in these two cases.

\noindent{\bf Case 1: $w_t<\frac{1}{2}\exp(T^{1/2}/4\log(2)c)$ for all $t$:}

In this case, let $u=\exp(T^{1/2}/4\log(2)c)$. Then $\Lm=1$, $\max_t \tfrac{\|g_t\|}{L(t)}=1$, and using (\ref{eqn:firstcondition}) the learner's regret is at least
\begin{align*}
R_T(u)&\ge Tu-T\frac{1}{2}\exp(\tfrac{T^{1/2}}{4\log(2)c})\\
&=\tfrac{1}{2}Tu\\
& = cu\log(u)\sqrt{T}\log(2)+\tfrac{T}{4}\exp(\tfrac{T^{1/2}}{4\log(2)c})\\
&>cu\log(u)\Lm\sqrt{T}\log(\Lm+1)+k\Lm\sqrt{T}\log(\Lm+1)+k\Lm\exp((2T)^{1/2-\epsilon})\\
&=(k+cu\log u) \Lm\sqrt{T}\log(\Lm+1) + k\Lm\exp\left[\left(2T\right)^{1/2-\epsilon}\right]
\end{align*}

\noindent{\bf Case 2: $w_t\ge\frac{1}{2} \exp(T^{1/2}/4\log(2)c)$ for some $t$:} 

In this case, $\Lm=2T$ and $\max_t \tfrac{\|g_t\|}{L(t)}=2T$. For $u=0$, using (\ref{eqn:secondcondition}), the regret is at least
\begin{align*}
R_T(u)&\ge \tfrac{T}{2}\exp(\tfrac{T^{1/2}}{4\log(2)c})\\
&\ge 2kT\exp((2T)^{1/2-\epsilon})+2kT\sqrt{T}\log(2T+1)\\
&= k\Lm\exp((2T)^{1/2-\epsilon})+k\Lm\sqrt{T}\log(\Lm+1)\\
&=(k+cu\log u) \Lm\sqrt{T}\log(\Lm+1) + k\Lm\exp\left[\left(2T\right)^{1/2-\epsilon}\right]
\end{align*}
\end{proof}

The exponential lower-bound arises because the learner has to move exponentially fast in order to deal with exponentially far away $u$, but then experiences exponential regret if the adversary provides a gradient of unprecedented magnitude in the opposite direction. However, if we play against an adversary that is constrained to give loss vectors $\|g_t\|\le\Lm$ for some $\Lm$ that does not grow with time, or if the losses do not grow too quickly, then we can still achieve $R_T(u)=O(\|u\|\log(\|u\|)\Lm\sqrt{T})$ asymptotically without knowing $\Lm$. In the following sections we describe an algorithm that accomplishes this.

\section{\algname}\label{sec:algname}

Our algorithm, \algname, adapts to the unknown $\Lm$ using a guess-and-double strategy that is robust to a small number of bound-violating $g_t$s. We initialize a guess $L$ for $\Lm$ to $\|g_1\|$. Then we run a novel known-$\Lm$ algorithm that can achieve good regret in the unconstrained $u$ setting. As soon as we see a $g_t$ with $\|g_t\|>2L$, we update our guess to $\|g_t\|$ and restart the known-$\Lm$ algorithm. To prove that this scheme is effective, we show (Lemma \ref{thm:firstregret}) that our known-$\Lm$ algorithm does not suffer too much regret when it sees a $g_t$ that violates its assumed bound.

Our known-$\Lm$ algorithm uses the Follow-the-Regularized-Leader (FTRL) framework. FTRL is an intuitive way to design OCO algorithms \citep{shalev07online}: Given functions $\psi_t:W\to \R$, at time $T$ we play $w_T=\argmin\left[\psi_{T-1}(w)+\sum_{t=1}^{T-1}\ell_t(w)\right]$. The functions $\psi_t$ are called regularizers. A large number of OCO algorithms (e.g. gradient descent) can be cleanly formulated as instances of this framework.

Our known-$\Lm$ algorithm is FTRL with regularizers $\psi_t(w)=\psi(w)/\eta_t$, where $\psi(w)=(\|w\|+1)\log(\|w\|+1)-\|w\|$ and $\eta_t$ is a scale-factor that we adapt over time. Specifically, we set $\eta_t^{-1} = k\sqrt{2}\sqrt{M_t+\|g\|^2_{1:t}}$, where we use the compressed sum notations $g_{1:T}=\sum_{t=1}^T g_t$ and $\|g\|^2_{1:T}=\sum_{t=1}^T\|g_t\|^2$. $M_t$ is defined recursively by $M_0=0$ and $M_{t}=\max(M_{t-1},\|g_{1:t}\|/p-\|g\|^2_{1:t})$, so that $M_{t}\ge M_{t-1}$, and $M_t+\|g\|^2_{1:t}\ge \|g_{1:t}\|/p$. $k$ and $p$ are constants: $k=\sqrt{2}$ and $p=\Lm^{-1}$.

\algname's strategy is to maintain an estimate $L_t$ of $\Lm$ at all time steps. Whenever it observes $\|g_t\|\ge 2L_{t}$, it updates $L_{t+1}=\|g_t\|$. We call periods during which $L_t$ is constant \emph{epochs}. Every time it updates $L_t$, it restarts our known-$\Lm$ algorithm with $p=\frac{1}{L_t}$, beginning a new epoch. Notice that since $L_t$ at least doubles every epoch, there will be at most $\log_2(\Lm/L_1)+1$ total epochs. To address edge cases, we set $w_t=0$ until we suffer a non-constant loss function, and we set the initial value of $L_t$ to be the first non-zero $g_t$. Pseudo-code is given in Algorithm \ref{alg:algname}, and Theorem \ref{thm:fullregret} states our regret bound. For simplicity, we re-index so that that $g_1$ is the first non-zero gradient received. No regret is suffered when $g_t=0$ so this does not affect our analysis.

\begin{algorithm}
   \caption{\algname}
   \label{alg:algname}
\begin{algorithmic}
   \STATE {\bfseries Initialize:} $k\gets\sqrt{2}$, $M_0\gets 0$, $w_1\gets 0$, $t_\star\gets 1$  // $t_\star$ is the start-time of the current epoch.
   \FOR{$t=1$ {\bfseries to} $T$}
   \STATE Play $w_t$, receive subgradient $g_t\in \partial \ell_t(w_t)$.
   \IF{$t=1$}
   \STATE $L_{1}\gets \|g_{1}\|$
    \STATE $p\gets1/L_1$  
   \ENDIF
   \STATE $M_t \gets \max(M_{t-1},\|g_{t_\star:t}\|/p-\|g\|^2_{t_\star:t})$.
   \STATE $\eta_t\gets \frac{1}{k\sqrt{2(M_t+\|g\|^2_{t_\star:t})}}$
   \STATE //Set $w_{t+1}$ using FTRL update
   \STATE $w_{t+1} \gets -\frac{g_{t_\star:t}}{\|g_{t_\star:t}\|}\left[\exp(\eta_t\|g_{t_\star:t}\|)-1\right]$ // $=\argmin_w\left[\frac{\psi(w)}{\eta_t}+g_{t_\star:t}w\right]$
   \IF{$\|g_{t}\|>2L_{t}$}
   \STATE //Begin a new epoch: update $L$ and restart FTRL
   \STATE $L_{t+1}\gets \|g_{t}\|$
   \STATE $p\gets1/L_{t+1}$  
   \STATE $t_\star\gets t+1$
   \STATE $M_t\gets 0$
   \STATE $w_{t+1}\gets0$

   \ELSE
   \STATE $L_{t+1}\gets L_{t}$
   \ENDIF
   
   \ENDFOR
\end{algorithmic}
\end{algorithm}

\begin{theorem}\label{thm:fullregret}
Let $W$ be a separable real inner-product space with corresponding norm $\|\cdot\|$ and suppose (with mild abuse of notation) every loss function $\ell_t:W\to \R$ has some subgradient $g_t\in W^*$ at $w_t$ such that $g_t(w) =  g_t\cdot w$ for some $g_t\in W$. Let $M_{\max} = \max_t M_t$. Then if $ \Lm=\max_t \|g_t\|$ and $L(t)=\max_{t'<t}\|g_t\|$, \algname\ achieves regret:
\begin{align*}
    R_T(u)&\le (2\psi(u)+96)\left(\log_2\left(\frac{\Lm}{L_1}\right)+1\right)\sqrt{M_{\max}+\|g\|^2_{1:T}}\\
    &\quad\quad+8\Lm\left(\log_2\left(\frac{\Lm}{L_1}\right)+1\right)\min\left[\exp\left(8\max_t \frac{\|g_t\|^2}{L(t)^2}\right),\exp(\sqrt{T/2})\right]\\
    &=O\left(\Lm\log\left(\frac{\Lm}{L_1}\right)\left[(\|u\|\log(\|u\|)+2)\sqrt{T}+\exp\left(8\max_t \frac{\|g_t\|^2}{L(t)^2}\right)\right]\right)
\end{align*}
\end{theorem}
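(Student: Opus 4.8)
The plan is to exploit the epoch structure of \algname. Because the subroutine is restarted with a fresh scale $p=1/L_i$ every time the guess $L_t$ is updated, the horizon splits into consecutive epochs $1,\dots,N$, each a single run of the known-$\Lm$ FTRL algorithm with a fixed guess $L_i$. Since regret is additive over rounds and each round belongs to exactly one epoch, the regret against any fixed $u$ decomposes as $R_T(u)=\sum_{i=1}^N R^{(i)}(u)$, with $R^{(i)}(u)$ the regret incurred during epoch $i$ against the same $u$. First I would bound $N$: the trigger $\|g_t\|>2L_t$ sets $L_{t+1}=\|g_t\|>2L_t$, so the guess at least doubles across each boundary; starting from $L_1=\|g_1\|$ and never exceeding $\Lm$ gives $N\le \log_2(\Lm/L_1)+1$.

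The engine of the proof is the per-epoch bound of Lemma \ref{thm:firstregret}, applied once per epoch with comparator $u$. That lemma bounds a single FTRL run even when its closing gradient violates the assumed bound. With $k=\sqrt 2$ the inverse step size is $\eta_t^{-1}=2\sqrt{M_t+\|g\|^2_{t_\star:t}}$, so the leading FTRL term $\psi(u)/\eta$ contributes exactly $2\psi(u)\sqrt{M+\|g\|^2}$; collecting the remaining stability and constant terms gives a per-epoch estimate of the form $R^{(i)}(u)\le (2\psi(u)+96)\sqrt{M^{(i)}+\sum_{t\in\text{epoch }i}\|g_t\|^2}+8L_i\exp(\cdot)$, with the exponential charging the lone out-of-range gradient that ends the epoch (the final epoch may terminate without such a gradient, which only lowers its regret). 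I would then sum. For the first term, $M^{(i)}\le M_{\max}$ and the epoch sum of squares is at most $\|g\|^2_{1:T}$, so each radical is at most $\sqrt{M_{\max}+\|g\|^2_{1:T}}$ and the sum is at most $N(2\psi(u)+96)\sqrt{M_{\max}+\|g\|^2_{1:T}}$. For the second term, $L_i\le\Lm$ and each penalty is replaced by its global maximum, giving $8\Lm N$ times the exponential.

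The delicate step is the exponential penalty and the minimum with $\exp(\sqrt{T/2})$. The tight branch $\exp\!\big(8\max_t \|g_t\|^2/L(t)^2\big)$ comes from the lemma's comparison of the closing gradient to the epoch scale. The crude branch I would derive directly from the iterate magnitude: within an epoch $M_t+\|g\|^2_{t_\star:t}\ge \|g_{t_\star:t}\|/p=L_i\|g_{t_\star:t}\|$, whence $\eta_t\|g_{t_\star:t}\|\le \sqrt{\|g_{t_\star:t}\|/(4L_i)}\le \sqrt{2L_iT/(4L_i)}=\sqrt{T/2}$, using $\|g_{t_\star:t}\|\le 2L_iT$. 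Hence $\|w_{t+1}\|=\exp(\eta_t\|g_{t_\star:t}\|)-1\le \exp(\sqrt{T/2})$, so a closing gradient of norm at most $\Lm$ costs at most $\Lm\exp(\sqrt{T/2})$; taking the smaller of the two estimates yields the $\min$. The $O(\cdot)$ restatement then follows from $\psi(u)=O(\|u\|\log\|u\|)$ and $\sqrt{M_{\max}+\|g\|^2_{1:T}}=O(\Lm\sqrt T)$.

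The main obstacle is not this bookkeeping but the per-epoch analysis hidden in Lemma \ref{thm:firstregret}: one must show the exponential FTRL play with regularizer $\psi$ simultaneously tracks an arbitrarily distant comparator and degrades only by an $\exp$-type penalty under a single bound-violating gradient, which is exactly where the $2\psi(u)$ leading constant and the penalty term originate. Granting that lemma, Theorem \ref{thm:fullregret} reduces to the summation over the at-most-$\log_2(\Lm/L_1)+1$ epochs described above.
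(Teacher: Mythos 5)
Your proposal is correct and follows essentially the same route as the paper's proof: decompose the regret over epochs delimited by the restarts, apply Lemma \ref{thm:firstregret} once per epoch (using that $L_{t_j}\le L\le 2L_{t_j}$ so the lemma's hypothesis $1/L\le p\le 2/L$ holds and the closing gradient is charged to the exponential term), bound the number of epochs by $\log_2(\Lm/L_1)+1$ via the doubling of the guess, and sum. The only cosmetic difference is that you re-derive the $\exp(\sqrt{T/2})$ branch from the iterate magnitude inside this proof, whereas the paper packages that bound inside Lemma \ref{thm:firstregret} itself.
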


The conditions on $W$ in Theorem \ref{thm:fullregret} are fairly mild. In particular they are satisfied whenever $W$ is finite-dimensional and in most kernel method settings \citep{hofmann2008kernel}. In the kernel method setting, $W$ is an RKHS of functions $\mathcal{X}\to \R$ and our losses take the form $\ell_t(w) = \ell_t(\langle w,k_{x_t}\rangle)$ where $k_{x_t}$ is the representing element in $W$ of some $x_t\in \mathcal{X}$, so that $g_t=\overline g_t k_{x_t}$ where $\overline g_t\in \partial \ell_t(\langle w,k_{x_t}\rangle)$.

Although we nearly match our lower-bound exponential term of $\exp((2T)^{1/2-\epsilon})$, in order to have a practical algorithm we need to do much better. Fortunately, the $\max_t \tfrac{\|g_t\|^2}{L(t)^2}$ term may be significantly smaller when the losses are not fully adversarial. For example, if the loss vectors $g_t$ satisfy $\|g_t\| = t^2$, then the exponential term in our bound reduces to a manageable constant even though $\|g_t\|$ is growing quickly without bound.

To prove Theorem \ref{thm:fullregret}, we bound the regret of \algname\ during each epoch. Recall that during an epoch, \algname\ is running FTRL with  $\psi_t(w)=\psi(w)/\eta_t$. Therefore our first order of business is to analyze the regret of FTRL across one of these epochs, which we do in Lemma \ref{thm:firstregret} (proved in appendix):
\begin{restatable}{lemma}{firstregret}\label{thm:firstregret}
Set $k=\sqrt{2}$. Suppose $\|g_t\|\le L$ for $t<T$, $1/L\le p \le2/L$, $g_T\le \Lm$ and $\Lm\ge L$. Let $W_{\max} = \max_{t\in[1,T]}\|w_t\|$. Then the regret of FTRL with regularizers $\psi_t(w)=\psi(w)/\eta_t$ is:
\begin{align*}
    R_T(u)&\le \psi(u)/\eta_T + 96\sqrt{M_T+\|g\|^2_{1:T}}+2\Lm\min\left[W_{\max},4\exp\left(4\frac{\Lm^2}{L^2}\right),\exp(\sqrt{T/2})\right]\\
    &\le (2\psi(u)+96)\sqrt{\sum_{t=1}^{T-1} L|g_t|+\Lm^2}+8\Lm\min\left[\exp\left(\frac{4\Lm^2}{L^2}\right),\exp(\sqrt{T/2})\right]\\
    &\le \Lm (2((\|u\|+1)\log(\|u\|+1)-\|u\|)+96)\sqrt{T}+8\Lm\min\left[e^{\frac{4\Lm^2}{L^2}},e^{\sqrt{T/2}}\right]
\end{align*}
\end{restatable}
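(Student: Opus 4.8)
The plan is to treat a single epoch as one run of FTRL with the time-varying regularizer $\psi_t(w)=\psi(w)/\eta_t$ and to split the regret into three pieces: a comparator penalty, a sum of per-round ``stability'' terms over the $T-1$ in-bound rounds, and a single correction for the final, possibly bound-violating, round. First I would invoke the standard increasing-regularizer FTRL bound (the regularizers are increasing because $\eta_t^{-1}=2\sqrt{M_t+\|g\|^2_{1:t}}$ is nondecreasing), which gives $R_T(u)\le \psi_{T-1}(u)+\sum_{t=1}^T g_t\cdot(w_t-w_{t+1})$ up to nonnegative, telescoping regularizer increments. Since $\psi_{T-1}(u)=\psi(u)/\eta_{T-1}\le\psi(u)/\eta_T$, this already produces the leading $\psi(u)/\eta_T$ term of the first displayed inequality.

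The substance is controlling the stability sum. I would compute the Fenchel conjugate $\psi^*(\theta)=\exp(\|\theta\|)-\|\theta\|-1$, which makes the FTRL iterate explicit, $w_{t+1}=-\tfrac{g_{1:t}}{\|g_{1:t}\|}\left(\exp(\eta_t\|g_{1:t}\|)-1\right)$, exactly the closed form in Algorithm \ref{alg:algname}, and express each stability term through the local second-order behavior (Bregman divergence) of the exponential $\psi^*$. Because $\psi^*$ is exponential, the naive per-round estimate carries a factor $\exp(\eta_{t-1}\|g_{1:t-1}\|)=\|w_t\|+1$, so a direct \adagrad-style summation does not close on its own. The crux of the whole argument, and the step I expect to be the main obstacle, is to show that $M_t$ is engineered precisely to absorb this blow-up: the constraint $M_t+\|g\|^2_{1:t}\ge\|g_{1:t}\|/p$ forces $\eta_t$ to shrink fast enough that the exponential factors cancel in a telescoping fashion, leaving a residual of \adagrad\ form $\sum_t \|g_t\|^2/\sqrt{\textstyle\sum_{s\le t}\|g_s\|^2}\lesssim\sqrt{M_T+\|g\|^2_{1:T}}$. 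Collecting constants yields the $96\sqrt{M_T+\|g\|^2_{1:T}}$ term.

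Next I would isolate the final round $g_T\cdot(w_T-w_{T+1})$. If no restart occurs ($\|g_T\|\le 2L$) this is an ordinary in-bound stability term and is already absorbed above; if a restart occurs the continuation is reset so the term is at most $\Lm\|w_T\|$. Crucially, $w_T$ depends only on $g_1,\dots,g_{T-1}$ with $\|g_t\|\le L$, so the same learning-rate control that gives $\eta_{T-1}\|g_{1:T-1}\|\le\sqrt{(T-1)/2}$ (using $M_{T-1}+\|g\|^2_{1:T-1}\ge\|g_{1:T-1}\|/p\ge \tfrac{L}{2}\|g_{1:T-1}\|$ and $\|g_{1:T-1}\|\le(T-1)L$) yields $\|w_T\|\le\exp(\sqrt{T/2})$. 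A more careful, $T$-independent estimate phrased in the magnitude ratio $\Lm/L$ supplies the $4\exp(4\Lm^2/L^2)$ alternative, and the trivial iterate bound supplies $W_{\max}$; together these three complementary bounds give the $2\Lm\min[W_{\max},4\exp(4\Lm^2/L^2),\exp(\sqrt{T/2})]$ correction. This establishes the first displayed inequality.

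The remaining two inequalities follow by routine substitution. Using $\eta_T^{-1}=2\sqrt{M_T+\|g\|^2_{1:T}}$ the penalty becomes $2\psi(u)\sqrt{M_T+\|g\|^2_{1:T}}$, which I combine with the $96\sqrt{\cdot}$ term into $(2\psi(u)+96)\sqrt{M_T+\|g\|^2_{1:T}}$; the definition of $M_t$ together with $1/L\le p\le 2/L$, $\|g_t\|\le L$ for $t<T$ and $\|g_T\|\le\Lm$ gives $M_T+\|g\|^2_{1:T}\le\sum_{t<T}L|g_t|+\Lm^2$; dropping $W_{\max}$ from the min and absorbing the factor $4$ converts the correction into $8\Lm\min[\exp(4\Lm^2/L^2),\exp(\sqrt{T/2})]$. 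Finally $\sum_{t<T}L|g_t|+\Lm^2\le(T-1)L^2+\Lm^2\le T\Lm^2$ (since $L\le\Lm$), so $\sqrt{\cdot}\le\Lm\sqrt T$, and writing $\psi(u)=(\|u\|+1)\log(\|u\|+1)-\|u\|$ yields the third line. I expect only the stability-sum control against the exponential curvature to be genuinely delicate; everything after the first inequality is bookkeeping.
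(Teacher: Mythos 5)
There is a genuine gap at what you yourself identify as the crux. You propose to drop the regularizer increments $(\eta_{t-1}^{-1}-\eta_t^{-1})\psi(w_{t+1})$ as nonpositive and then bound the bare stability sum $\sum_t g_t\cdot(w_t-w_{t+1})$ by an \adagrad-type quantity $O(\sqrt{M_T+\|g\|^2_{1:T}})$, arguing that the constraint $M_t+\|g\|^2_{1:t}\ge \|g_{1:t}\|/p$ makes the exponential factors ``cancel in a telescoping fashion.'' No such bound holds. Take the one-dimensional stream $g_t=-1$ for all $t$, with $L=1$, $p=1$: then $M_t=0$, $\eta_t=\tfrac{1}{2\sqrt t}$, $w_{t+1}=\exp(\sqrt t/2)-1$, and the stability sum literally telescopes to $\sum_t g_t(w_t-w_{t+1})=w_{T+1}-w_1=\exp(\sqrt T/2)-1$, whereas $\sqrt{M_T+\|g\|^2_{1:T}}=\sqrt T$. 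So the stability sum alone is exponentially large in $\sqrt T$ and cannot be absorbed into the $96\sqrt{M_T+\|g\|^2_{1:T}}$ term. The paper's mechanism is different in an essential way: the negative increments are \emph{kept}, and the key lemma (Lemma \ref{thm:cancellingregret}) shows the \emph{combined} per-round quantity $(\eta_{t-1}^{-1}-\eta_t^{-1})\psi(w_{t+1})+g_t(w_t-w_{t+1})$ is $\le 0$ whenever $\|w_t\|\ge 15$ --- i.e.\ precisely when the iterate is large, the exponentially large stability term is cancelled by the exponentially large negative increment (both scale like $\|w_{t+1}\|$ up to log factors). Only on rounds with $\|w_t\|\le 15$ does one use the local strong-convexity bound $g_t(w_t-w_{t+1})\le 96\,\eta_t g_t^2$ (Lemma \ref{thm:smallw}), and these are the only rounds contributing to the $96\sqrt{M_T+\|g\|^2_{1:T}}$ term. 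Without this per-round cancellation your argument cannot close.

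Two secondary points. First, your norm-based conjugate-duality sketch silently assumes the worst case occurs when $g_t$ is aligned with $g_{1:t-1}$; the paper has to prove this (Theorem \ref{thm:onedimensionreduction}, via Lagrange multipliers, complicated by the fact that $\eta_t$ depends on $g_t$ through both $\|g\|^2_{1:t}$ and $M_t$), so your proof would need a substitute for that reduction. Second, your treatment of the last round and the bookkeeping for the second and third displayed inequalities (the induction giving $M_t+\|g\|^2_{1:t}\le L\sum_{t'\le t}|g_{t'}|$ for $t<T$, the $\exp(\sqrt{T/2})$ and $4\exp(4\Lm^2/L^2)$ bounds on $\|w_T\|$) do match the paper and are fine.
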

Lemma \ref{thm:firstregret} requires us to know the value of $L$ in order to set $p$. However, the crucial point is that it encompasses the case in which $L$ is misspecified on the last loss vector. This allows us to show that \algname\ does not suffer too much by updating $p$ on-the-fly.

\begin{proof}[Proof of Theorem \ref{thm:fullregret}]

The theorem follows by applying Lemma \ref{thm:firstregret} to each epoch in which $L_t$ is constant.

Let $1=t_1,t_2,t_3,\cdots,t_{n}$ be the various increasing values of $t_\star$ (as defined in Algorithm \ref{alg:algname}), and we define $t_{n+1}=T+1$. Then define
\[
R_{a:b}(u)=\sum_{t=a}^{b-1} g_t(w_t-u)
\]

so that $R_T(u)\le\sum_{j=1}^{n} R_{t_j:t_{j+1}}(u)$. We will bound $R_{t_j:t_{j+1}}(u)$ for each $j$.

Fix a particular $j<n$.
Then $R_{t_j:t_{j+1}}(u)$ is simply the regret of FTRL with $k=\sqrt{2}$, $p=\frac{1}{L_{t_j}}$, $\eta_t=\frac{1}{k\sqrt{2(M_t+\|g\|^2_{t_{j}:t})}}$ and regularizers $\psi(w)/\eta_t$. By definition of $L_t$, for $t\in [1,t_{j+1}-2]$ we have $\|g_t\|\le 2L_{t_j}$. Further, if $L=\max_{t\in[1,t_{j+1}-2]}\|g_t\|$ we have $L\ge L_{t_j}$. Therefore, $L_{t_j}\le L\le 2L_{t_j}$ so that $\frac{1}{L}\le p\le \frac{2}{L}$. Further, we have $\|g_{t_{j+1}-1}\|/L_{t_j}\le 2\max_t \|g_t\|/L(t)$. Thus by Lemma \ref{thm:firstregret} we have
\begin{align*}
    R_{t_j:t_{j+1}}(u) &\le  \psi(u)/\eta_{t_{j+1}-1}+96\sqrt{M_{t_{j+1}-1}+\|g\|^2_{t_j:t_{j+1}-1}}\\
    &\quad\quad\quad+2\Lm \min\left[W_{\max},4\exp\left(4\frac{\|g_{t_{j+1}-1}\|^2}{L_{t_j}^2}\right),\exp\left(\frac{\sqrt{t_{j+1}-t_{j}}}{\sqrt{2}}\right)\right]\\
    &\le \psi(u)/\eta_{t_{j+1}-1}+96\sqrt{M_{\max}+\|g\|^2_{t_j:t_{j+1}-1}}+8\Lm\min\left[e^{8\max_t\frac{\|g_t\|^2}{L(t)^2}},e^{\sqrt{T/2}}\right]\\
    &\le (2\psi(u)+96)\sqrt{M_{\max}+\|g\|^2_{1:T}} +8\Lm\min\left[\exp\left(8\max_t\frac{\|g_t\|^2}{L(t)^2}\right),\exp(\sqrt{T/2})\right]
\end{align*}
Summing across epochs, we have
\begin{align*}
    R_T(u)&=\sum_{j=1}^n R_{t_j:t_{j+1}}(u)\\
    &\le n\left[(2\psi(u)+96)\sqrt{M_{\max}+\|g\|^2_{1:T}} +8\Lm\min\left[\exp\left(8\max_t\frac{\|g_t\|^2}{L(t)^2}\right),\exp\left(\sqrt{T/2}\right)\right]\right]\\
\end{align*}
Observe that $n\le \log_2(\Lm/L_1)+1$ to prove the first line of the theorem. The big-Oh expression follows from the inequality: $M_{t_{j+1}-1}\le L_{t_j}\sum_{t=t_j}^{t_{j+1}-1} \|g_t\|\le \Lm\sum_{t=1}^T \|g_t\|$.
\end{proof}

Our specific choices for $k$ and $p$ are somewhat arbitrary. We suspect (although we do not prove) that the preceding theorems are true for larger values of $k$ and any $p$ inversely proportional to $L_t$, albeit with differing constants. In Section \ref{sec:experiments} we perform experiments using the values for $k$, $p$ and $L_t$ described in Algorithm \ref{alg:algname}. In keeping with the spirit of designing a hyperparameter-free algorithm, no attempt was made to empirically optimize these values at any time.

\section{Experiments}\label{sec:experiments}
\subsection{Linear Classification}
To validate our theoretical results in practice, we evaluated \algname\ on 8 classification datasets. The data for each task was pulled from the libsvm website \citep{chang2011libsvm}, and can be found individually in a variety of sources \citep{guyon2004result,chang2001ijcnn,lecun1998gradient,lewis2004rcv1,duarte2004vehicle,Lichman:2013,kogan2009predicting}. We use linear classifiers with hinge-loss for each task and we compare \algname\ to five other optimization algorithms: \adagrad\ \citep{duchi10adagrad}, \scaleinvariant\ \citep{orabona2014generalized}, \pistol\ \citep{orabona2014simultaneous}, \adam\ \citep{kingma2014adam}, and \adadelta\ \citep{zeiler2012adadelta}. Each of these algorithms requires tuning of some hyperparameter for unconstrained problems with unknown $\Lm$ (usually a scale-factor on a learning rate). In contrast, our \algname\ requires no such tuning.

We evaluate each algorithm with the average loss after one pass through the data, computing a prediction, an error, and an update to model parameters for each example in the dataset. Note that this is not the same as a cross-validated error, but is closer to the notion of regret addressed in our theorems. We plot this average loss versus hyperparameter setting for each dataset in Figures \ref{fig:lossplots1} and \ref{fig:lossplots2}. These data bear out the effectiveness of \algname: while it is not unilaterally the highest performer on all datasets, it shows remarkable robustness across datasets with zero manual tuning.
\begin{figure*}[th]
\centering
\begin{minipage}{0.49\textwidth}
\subfigure{
\includegraphics[width = 1.0\textwidth]{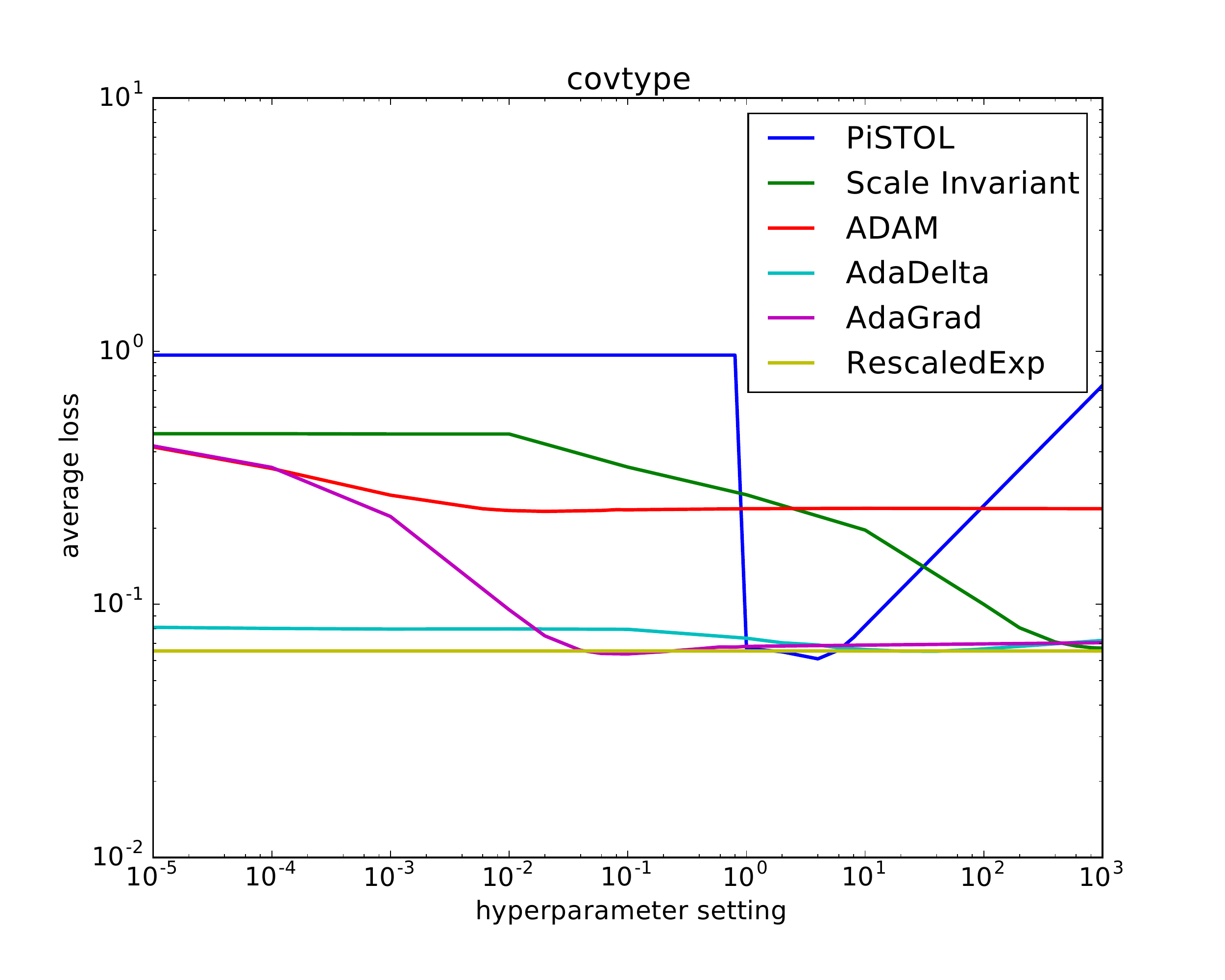}
}
\end{minipage}
\begin{minipage}{0.49\textwidth}
\subfigure{
\includegraphics[width = 1.0\textwidth]{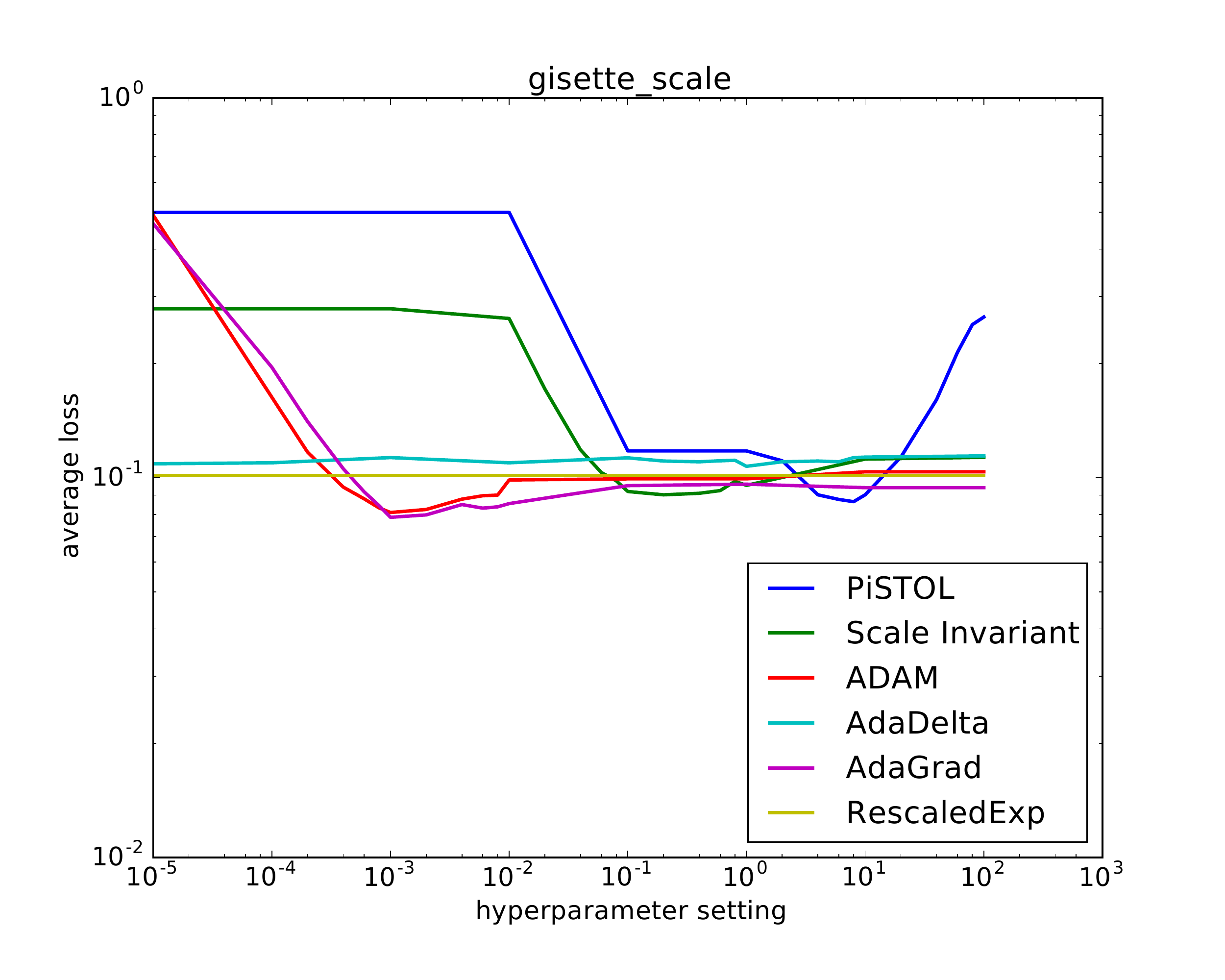}
}
\end{minipage}
\begin{minipage}{0.49\textwidth}
\subfigure{
\includegraphics[width = 1.0\textwidth]{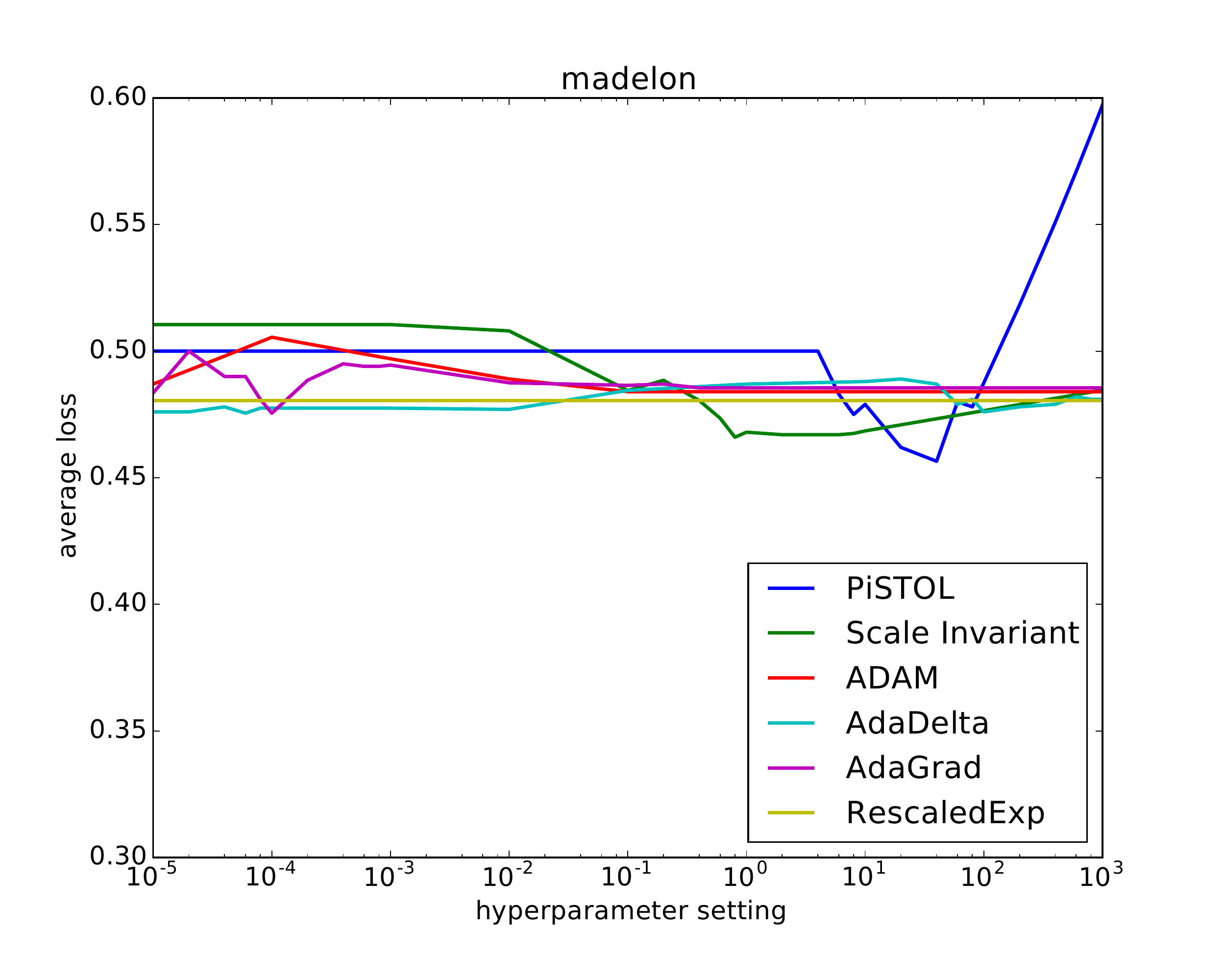}
}
\end{minipage}
\begin{minipage}{0.49\textwidth}
\subfigure{
\includegraphics[width = 1.0\textwidth]{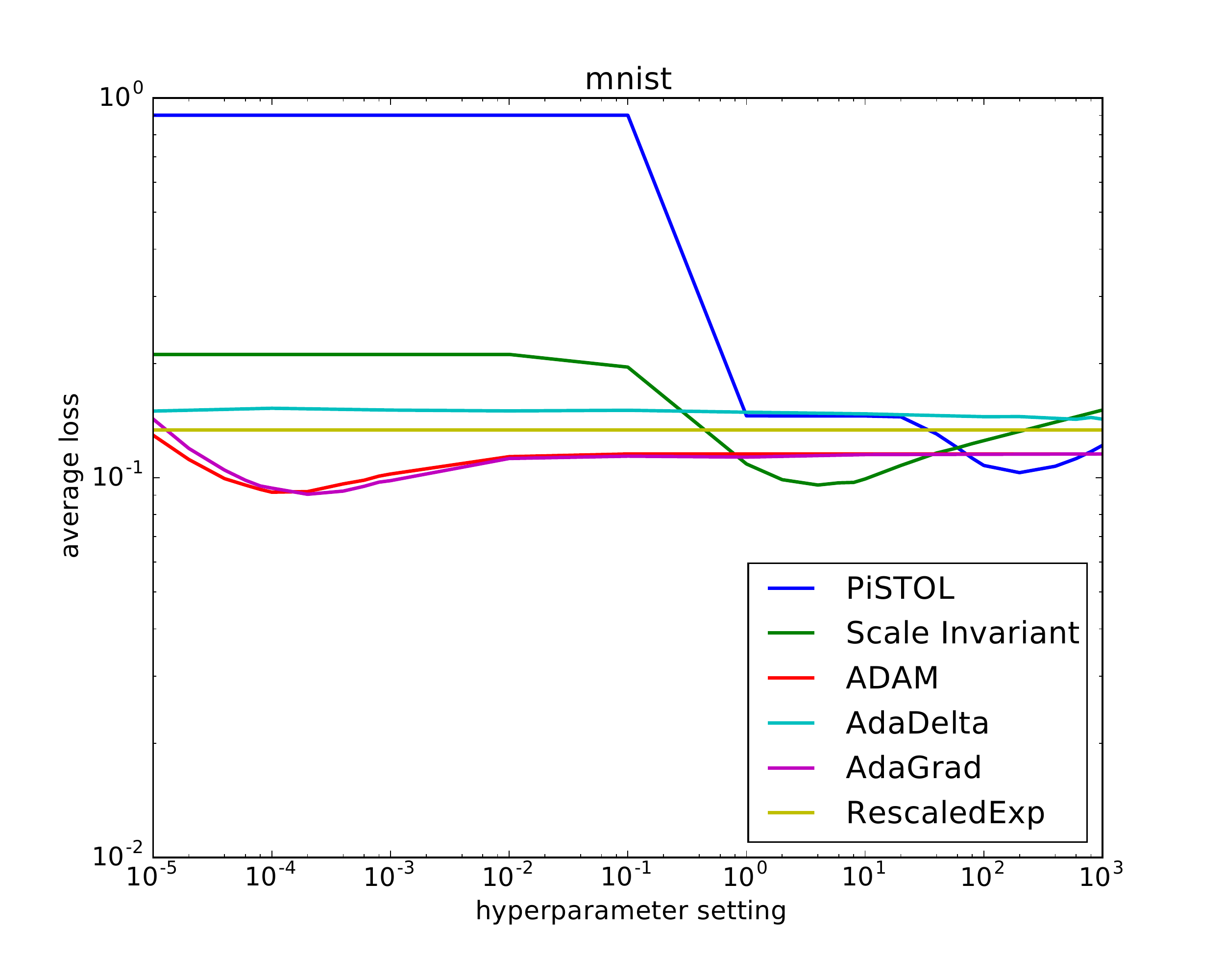}
}
\end{minipage}
\caption{Average loss vs hyperparameter setting for each algorithm across each dataset. \algname\ has no hyperparameters and so is represented by a flat yellow line. Many of the other algorithms display large sensitivity to hyperparameter setting.
}
\label{fig:lossplots1}
\end{figure*}
\begin{figure*}[th]
\centering
\begin{minipage}{0.49\textwidth}
\subfigure{
\includegraphics[width =1.0\textwidth]{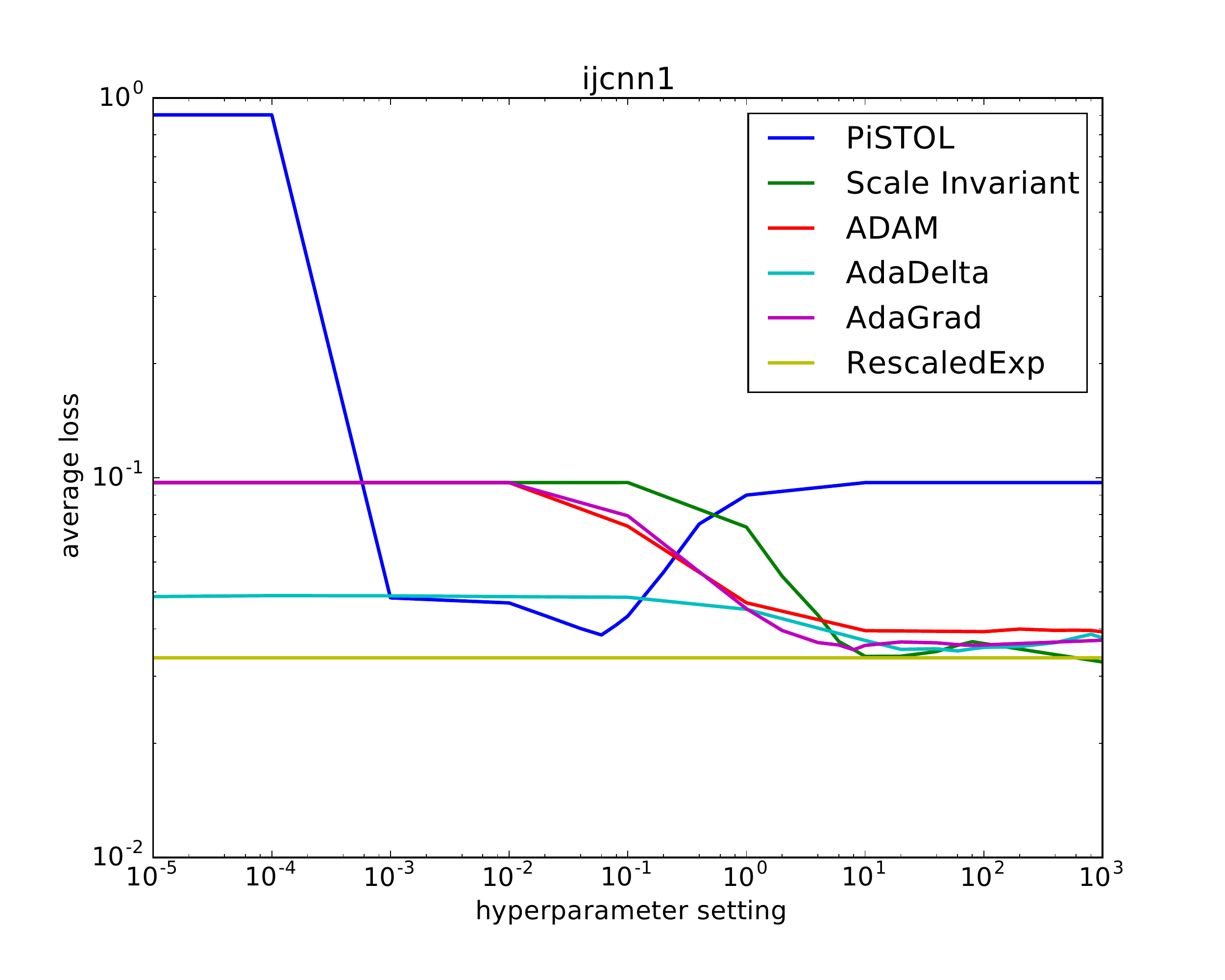}
}
\end{minipage}
\begin{minipage}{0.49\textwidth}
\subfigure{
\includegraphics[width = 1.0\textwidth]{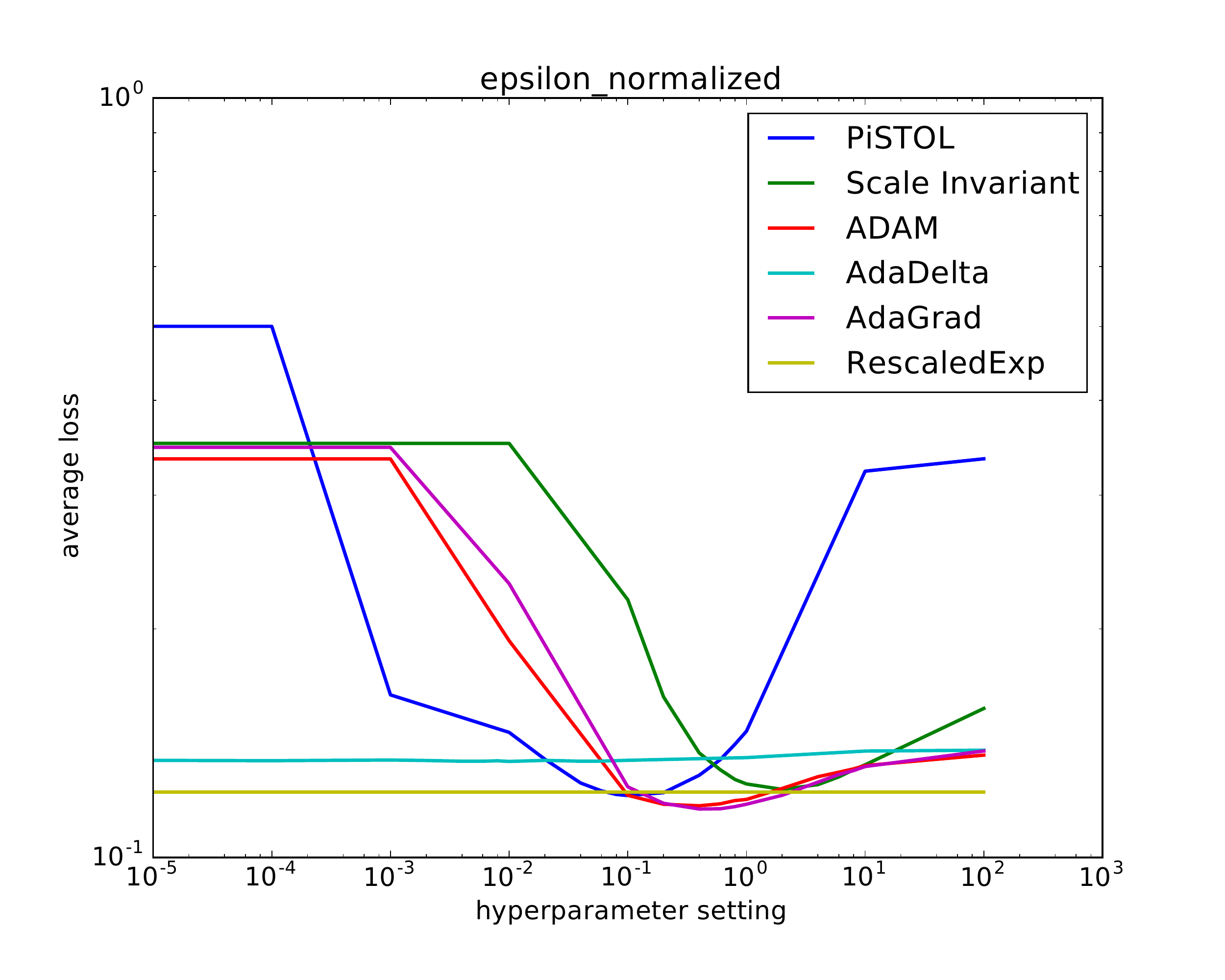}
}
\end{minipage}
\begin{minipage}{0.49\textwidth}
\subfigure{
\includegraphics[width = 1.0\textwidth]{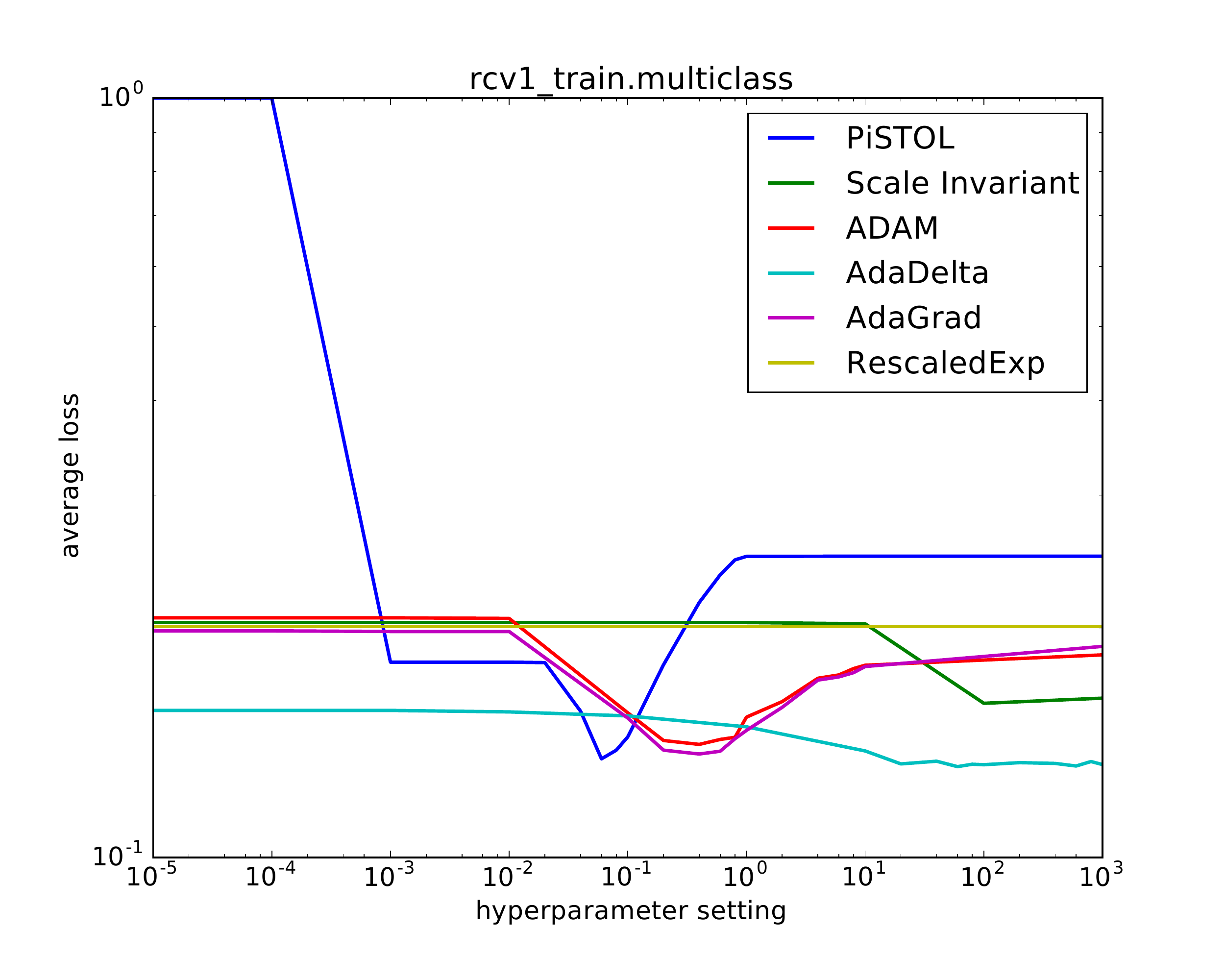}

}
\end{minipage}
\begin{minipage}{0.49\textwidth}
\subfigure{
\includegraphics[width = 1.0\textwidth]{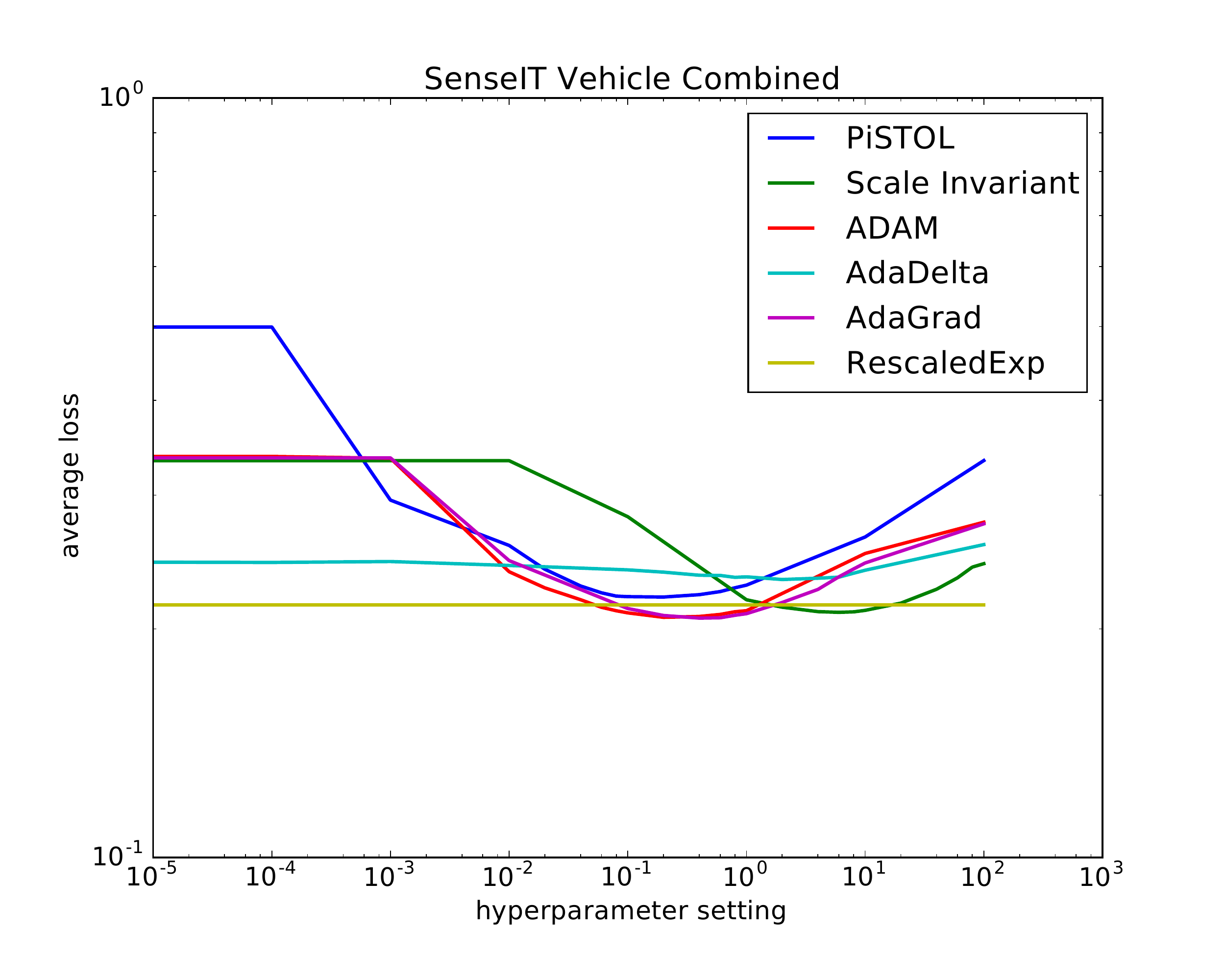}
}
\end{minipage}
\caption{Average loss vs hyperparameter setting, continued from Figure \ref{fig:lossplots1}.
}
\label{fig:lossplots2}
\end{figure*}
\subsection{Convolutional Neural Networks}
We also evaluated \algname\ on two convolutional neural network models. These models have demonstrated remarkable success in computer vision tasks and are becoming increasingly more popular in a variety of areas, but can require significant hyperparameter tuning to train. We consider the MNIST \citep{lecun1998gradient} and CIFAR-10 \citep{krizhevsky2009learning} image classification tasks.

Our MNIST architecture consisted of two consecutive $5\times 5$ convolution and $2\times 2$ max-pooling layers followed by a 512-neuron fully-connected layer. Our CIFAR-10 architecture was two consecutive $5\times 5$ convolution and $3\times 3$ max-pooling layers followed by a $384$-neuron fully-connected layer and a $192$-neuron fully-connected layer. 
\begin{figure*}[th!]
\centering
\begin{minipage}{0.49\textwidth}
\subfigure{
\includegraphics[width =1.0\textwidth]{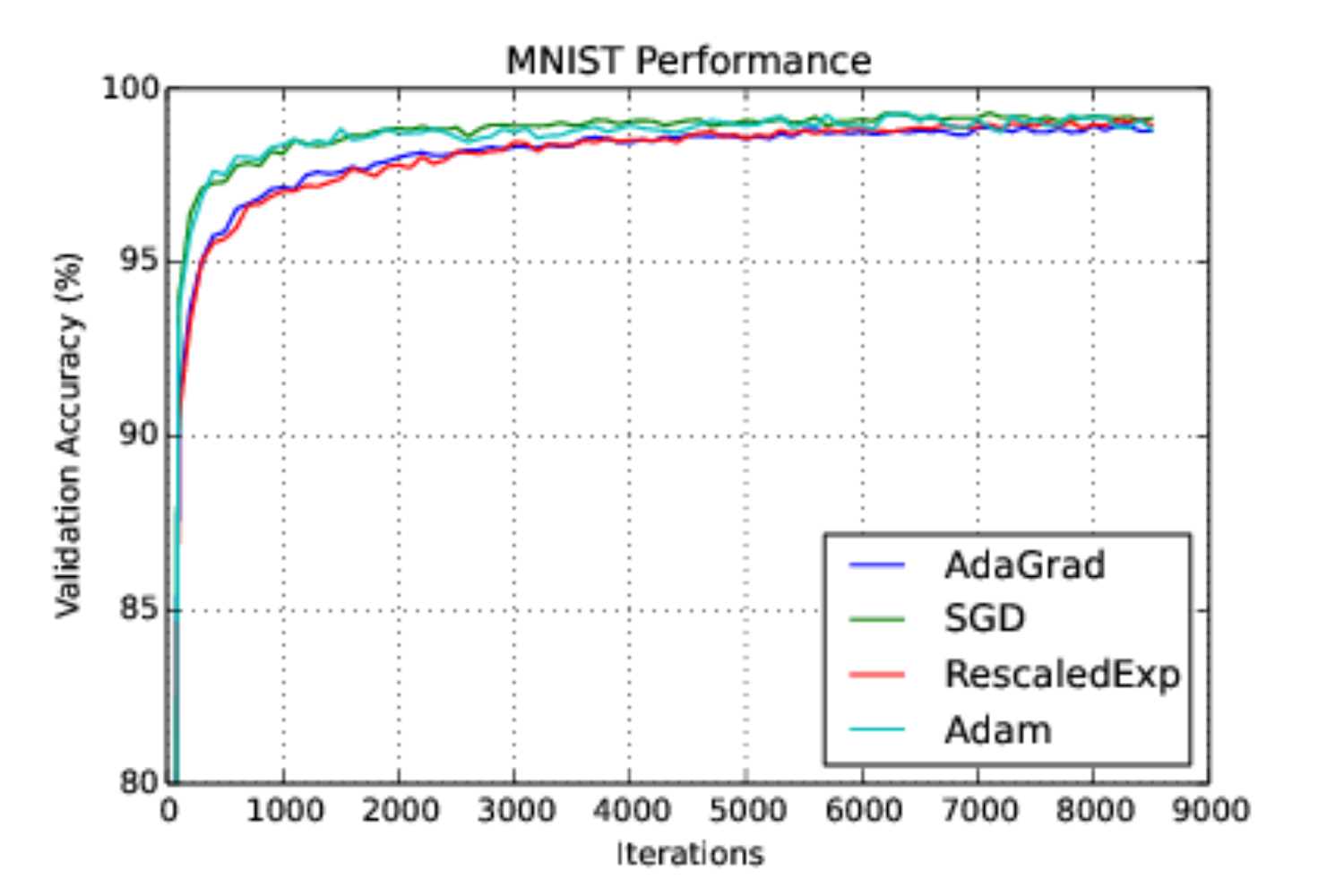}
}
\end{minipage}
\begin{minipage}{0.49\textwidth}
\subfigure{
\includegraphics[width = 1.0\textwidth]{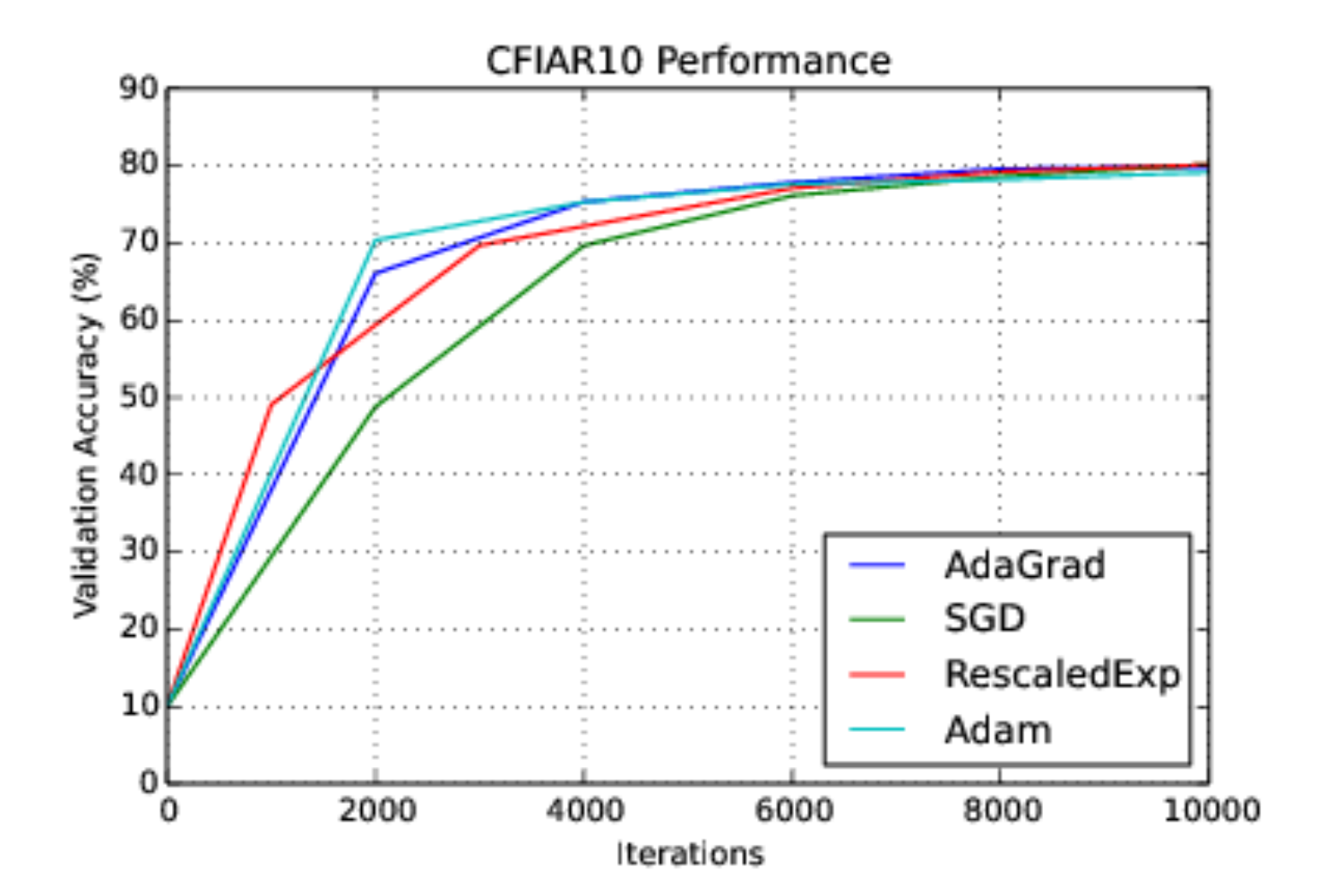}

}
\end{minipage}
\caption{We compare \algname\ to \adam, \adagrad, and stochastic gradient descent (SGD), with learning-rate hyperparameter optimization for the latter three algorithms. All algorithms achieve a final validation accuracy of $99\%$ on MNIST and $84\%$, $84\%$, $83\%$ and $85\%$ respectively on CIFAR-10 (after 40000 iterations).
}
\label{fig:cnn}
\end{figure*}

These models are highly non-convex, so that none of our theoretical analysis applies. Our use of \algname\ is motivated by the fact that in practice convex methods are used to train these models. We found that \algname\ can match the performance of other popular algorithms (see Figure \ref{fig:cnn}). 

In order to achieve this performance, we made a slight modification to \algname: when we update $L_t$, instead of resetting $w_t$ to zero, we re-center the algorithm about the previous prediction point. We provide no theoretical justification for this modification, but only note that it makes intuitive sense in stochastic optimization problems, where one can reasonably expect that the previous prediction vector is closer to the optimal value than zero.

\section{Conclusions}\label{sec:conclusions}

We have presented \algname, an Online Convex Optimization algorithm that achieves regret $\tilde O(\|u\|\log(\|u\|)\Lm\sqrt{T}+\exp(8\max_t \|g_t\|^2/L(t)^2))$ where $\Lm=\max_t \|g_t\|$ is unknown in advance. Since \algname\ does not use any prior-knowledge about the losses or comparison vector $u$, it is hyperparameter free and so does not require any tuning of learning rates. We also prove a lower-bound showing that any algorithm that addresses the unknown-$\Lm$ scenario must suffer an exponential penalty in the regret. We compare \algname\ to prior optimization algorithms empirically and show that it matches their performance.

While our lower-bound matches our regret bound for \algname\ in terms of $T$, clearly there is much work to be done. 
For example, when \algname\ is run on the adversarial loss sequence presented in Theorem \ref{thm:lowerbound}, its regret matches the lower-bound, suggesting that the optimality gap could be improved with superior analysis.
We also hope that our lower-bound inspires work in algorithms that adapt to non-adversarial properties of the losses to avoid the exponential penalty.
\small
\bibliographystyle{unsrt}
\bibliography{all}

\appendix

\section{Follow-the-Regularized-Leader (FTRL) Regret}
Recall that the FTRL algorithm uses the strategy $w_{t+1} = \argmin \psi_t(w) + \sum_{t'=1}^t \ell_{t'}(w)$, where the functions $\psi_t$ are called regularizers.
\begin{theorem}\label{thm:ftrlregret}
FTRL with regularizers $\psi_t$ and $\psi_0(w_1)=0$ obtains regret:
\begin{align}\label{eqn:general_regret}
    R_t(u)&\le\psi_T(u)+\sum_{t=1}^T \psi_{t-1}(w_{t+1})-\psi_t(w_{t+1})+\ell_t(w_t)-\ell_t(w_{t+1})
\end{align}
Further, if the losses are linear $\ell_t(w)=g_t\cdot w$ and $\psi_t(w)=\frac{1}{\eta_t}\psi(w)$ for some values $\eta_t$ and fixed function $\psi$, then the regret is
\begin{align}\label{eqn:linear_regret}
    R_t(u)&\le\frac{1}{\eta_T}\psi(u)+\sum_{t=1}^T\left(\frac{1}{\eta_{t-1}}-\frac{1}{\eta_t}\right)\psi(w_{t+1})+g_t\cdot(w_t-w_{t+1}) 
\end{align}
\end{theorem}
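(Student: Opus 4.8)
The plan is to derive both regret bounds from a single combinatorial fact about ``following the leader,'' the Be-The-Leader inequality. First I would fold the regularizers into the loss sequence by defining incremental functions $h_0 = \psi_0$ and $h_t = \ell_t + \psi_t - \psi_{t-1}$ for $t\ge 1$. With this notation the cumulative objective telescopes, $\sum_{s=0}^t h_s = \psi_t + \sum_{s=1}^t \ell_s$, so the FTRL update is exactly $w_{t+1} = \argmin_w \sum_{s=0}^t h_s(w)$; that is, $w_{t+1}$ is the leader for the augmented sequence $h_0,\dots,h_t$.

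The key step is the Be-The-Leader inequality: for every $u$ and every $T$,
\begin{align*}
\sum_{t=0}^T h_t(w_{t+1}) \le \sum_{t=0}^T h_t(u).
\end{align*}
I would prove this by induction on $T$. The base case $T=0$ is immediate since $w_1$ minimizes $h_0$. For the inductive step, the crucial move is to apply the hypothesis at the specific point $u = w_{T+1}$, giving $\sum_{t=0}^{T-1} h_t(w_{t+1}) \le \sum_{t=0}^{T-1} h_t(w_{T+1})$; adding $h_T(w_{T+1})$ to both sides and using that $w_{T+1}$ minimizes $\sum_{t=0}^T h_t$ over all $w$ closes the induction. This substitution is the one genuinely clever step, and is where I expect any subtlety to hide.

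With the inequality in hand, the rest is bookkeeping. Expanding $h_t$, the right-hand side telescopes to $\sum_{t=1}^T \ell_t(u) + \psi_T(u)$ after cancellation of $\psi_0(u)$, while the left-hand side contributes $\sum_{t=1}^T \ell_t(w_{t+1})$ together with the regularizer differences and the term $\psi_0(w_1)$, which vanishes by hypothesis. Rearranging to isolate $-\sum_t \ell_t(u)$ and then adding $\sum_t \ell_t(w_t)$ to both sides converts the left-hand loss terms into $\ell_t(w_t) - \ell_t(w_{t+1})$ and yields exactly (\ref{eqn:general_regret}). Finally I would specialize: substituting $\ell_t(w) = g_t \cdot w$ and $\psi_t = \psi/\eta_t$ turns $\psi_{t-1}(w_{t+1}) - \psi_t(w_{t+1})$ into $(\tfrac{1}{\eta_{t-1}} - \tfrac{1}{\eta_t})\psi(w_{t+1})$ and $\ell_t(w_t) - \ell_t(w_{t+1})$ into $g_t \cdot (w_t - w_{t+1})$, giving (\ref{eqn:linear_regret}). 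The only point needing care is the $t=1$ boundary term involving $\eta_0$, which is harmless precisely because $\psi_0(w_1) = 0$.
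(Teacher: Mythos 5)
Your proof is correct and is essentially the same argument as the paper's: the paper's recursive bound $R_T(u)\le \psi_T(u)-\psi_T(w_{T+1})+\ell_T(w_T)-\ell_T(w_{T+1})+R_{T-1}(w_{T+1})$, unrolled over $T$, is precisely your Be-The-Leader induction with the inductive hypothesis applied at the comparator $u=w_{T+1}$. Your repackaging via the augmented losses $h_t=\ell_t+\psi_t-\psi_{t-1}$ is just a cleaner, more modular presentation of the same telescoping, and your closing remark about the $t=1$ term is the same (harmless) boundary convention the paper adopts.
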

\begin{proof}
The first part follows from some algebraic manipulations:
\begin{align*}
\sum_{t=1}^T \ell_t(u)+\psi_T(u)&\ge \psi_T(w_{T+1}) +\sum_{t=1}^T \ell_t(w_{T+1})\\
-\sum_{t=1}^T\ell_t(u)&\le \psi_T(u)-\psi_T(w_{T+1})-\sum_{t=1}^T \ell_t(w_{T+1})\\
\end{align*}
\begin{align*}
R_T(u)&=\sum_{t=1}^T \ell_t(w_t)-\sum_{t=1}^T\ell_t(u)\\
&\le \psi_T(u)-\psi_T(w_{T+1})+\sum_{t=1}^T \ell_t(w_t)-\ell_t(w_{T+1})\\
&=\psi_T(u)-\psi_T(w_{T+1})+\ell_T(w_{T})-\ell_T(w_{T+1}) +R_{T-1}(w_{T+1})\\
&\le\psi_T(u)-\psi_T(w_{T+1})+\ell_T(w_{T})-\ell_T(w_{T+1})+\\
&\ +\sum_{t=1}^{T-1} \psi_{t}(w_{t+2})-\psi_t(w_{t+1})+\ell_t(w_t)-\ell_t(w_{t+1})\\
&=\psi_T(u)+\ell_1(w_1)-\ell_1(w_2)-\psi_1(w_2)\\
&\ +\sum_{t=2}^{T}  \psi_{t-1}(w_{t+1})-\psi_t(w_{t+1})+\ell_t(w_t)-\ell_t(w_{t+1})\\
&=\psi_T(u)+\sum_{t=1}^T \psi_{t-1}(w_{t+1})-\psi_t(w_{t+1})+\ell_t(w_t)-\ell_t(w_{t+1})
\end{align*}

where we're assuming $\psi_0(w_1)=0$ in the last step.

Now let's specialize to the case of linear losses $\ell_t(w)=g_t\cdot w$ and regularizers of the form $\psi_t(w)=\frac{1}{\eta_t} \psi(w)$ for some fixed regularizer $\psi$ and varying scalings $\eta_t$. Plugging this into the previous bound gives:
\begin{align*}
    R_t(u)\le \frac{1}{\eta_T}\psi(u)+\sum_{t=1}^T\left(\frac{1}{\eta_{t-1}}-\frac{1}{\eta_t}\right)\psi(w_{t+1})+g_t\cdot(w_t-w_{t+1}) 
\end{align*}
\end{proof}

While this formulation of the regret of FTRL is sufficient for our needs, our analysis is not tight. We refer the reader to \citep{mcmahan2014survey} for a stronger FTRL bound that can improve constants in some analyses.

\section{Proof of Lemma \ref{thm:firstregret}}

We start off by computing the FTRL updates with regularizers $\psi(w)/\eta_t$:
\[
\nabla \psi(w) = \log(\|w\|+1)\frac{w}{\|w\|}
\]
so that
\begin{align*}
w_{T+1} &= \argmin \frac{1}{\eta_T}\psi(w)+\sum_{t=1}^T g_t\cdot w\\
&=-\frac{g_{1:t}}{\|g_{1:t}\|}(\exp(\eta_T\|g_{1:T}\|)-1)
\end{align*}

Our goal will be to show that the terms $\left(\frac{1}{\eta_{t-1}}-\frac{1}{\eta_t}\right)\psi(w_{t+1})+g_t\cdot(w_t-w_{t+1})$ in the sum in (\ref{eqn:linear_regret}) are negative. In particular, note that sequence of $\eta_t$ is non-increasing so that $\left(\frac{1}{\eta_{t-1}}-\frac{1}{\eta_t}\right)\psi(w_{t+1})\le 0$ for all $t$. Thus our strategy will be to bound $g_t\cdot(w_t-w_{t+1})$.

\subsection{Reduction to one dimension}\label{sec:dimfree}

In order to bound $\left(\frac{1}{\eta_{t-1}}-\frac{1}{\eta_t}\right)\psi(w_{t+1})+g_t\cdot(w_t-w_{t+1})$, we first show that it suffices to consider the case when $g_t$ and $g_{1:t-1}$ are co-linear.

\begin{theorem}\label{thm:onedimensionreduction}
Let $W$ be a separable inner-product space and suppose (with mild abuse of notation) every loss function $\ell_t:W\to \R$ has some subgradient $g_t\in W^*$ such that $g_tw = \langle g_t,w\rangle$ for some $g_t\in W$. 
Suppose we run an FTRL algorithm with regularizers $\frac{1}{\eta_t}\psi(\|w\|)$ on loss functions $\ell_t$ such that $w_{t+1}=\frac{g_{1:t}}{\|g_{1:t}\|}f(\eta_t\|g_{1:t}\|)$ for some function $f$ for all $t$ where $\eta_t = \frac{c}{\sqrt{M_t+\|g\|^2_{1:t}}}$ for some constant $c$. Then for any $g_t$ with $\|g_t\|=L$, both $(\eta_{t-1}^{-1}-\eta_t^{-1})\psi(\|w_{t+1}\|)+g_t(w_t-w_{t+1})$ and $g_t(w_t-w_{t+1})$ are maximized when $g_t$ is a scalar multiple of $g_{1:t-1}$.
\end{theorem}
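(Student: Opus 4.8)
The plan is to show that each of the two target quantities, viewed as a function of $g_t$ (subject to $\|g_t\|=L$), actually depends on $g_t$ only through its component along $g_{1:t-1}$, and then to reduce the maximization to a one-variable problem on a bounded interval whose endpoints are exactly the colinear configurations.

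First I would freeze the history, treating $G:=g_{1:t-1}$, $a:=\|G\|$, the partial sums $\|g\|^2_{1:t-1}$, and the quantities $M_{t-1}$, $\eta_{t-1}$, and $w_t=\tfrac{G}{\|G\|}f(\eta_{t-1}a)$ as constants. Writing $u=G/\|G\|$ and decomposing $g_t=\alpha u+v$ with $v\perp u$ and $\|v\|^2=L^2-\alpha^2$, the single scalar $\alpha=\langle g_t,u\rangle$ ranges over $[-L,L]$. The key point is that only two facts about $g_t$ enter the expressions, namely $\langle g_t,G\rangle=\alpha a$ and $\|g_t\|^2=L^2$. From these, $\|g_{1:t}\|^2=a^2+2a\alpha+L^2=:r(\alpha)^2$ and $\|g\|^2_{1:t}=\|g\|^2_{1:t-1}+L^2$ are functions of $\alpha$ alone.

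Next I would propagate this observation through every term. Since $M_t=\max(M_{t-1},\,\|g_{1:t}\|/p-\|g\|^2_{1:t})$ and $\eta_t=c/\sqrt{M_t+\|g\|^2_{1:t}}$, both $M_t$ and $\eta_t$ are functions of $\alpha$; then $\|w_{t+1}\|=|f(\eta_t r(\alpha))|$, so $\psi(\|w_{t+1}\|)$ depends only on $\alpha$. The inner products evaluate to $g_t\cdot w_t=\alpha f(\eta_{t-1}a)$ and $g_t\cdot w_{t+1}=\tfrac{\alpha a+L^2}{r(\alpha)}f(\eta_t r(\alpha))$, again functions of $\alpha$. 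This shows each of the two quantities equals a single one-dimensional function $h(\alpha)$ on $[-L,L]$, and that the colinear choices $g_t=\pm(L/a)G$ are precisely the endpoints $\alpha=\pm L$.

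It then remains to show $h$ attains its maximum over $[-L,L]$ at an endpoint, for which it suffices to show $h$ is convex (a convex function on an interval is maximized at a boundary point). Here I would use that $f$ is not arbitrary but is the radial profile of the FTRL minimizer of the convex, radially-symmetric regularizer $\psi(\|w\|)/\eta_t$ computed above, so $f$ inherits monotonicity and convexity from $\psi$; combined with $r(\alpha)$ being concave and $\eta_{t-1}^{-1}-\eta_t^{-1}\le 0$, this should yield convexity of each summand, hence of both $g_t\cdot(w_t-w_{t+1})$ and the full expression. The main obstacle is the $\max$ defining $M_t$, which creates a kink and prevents $\eta_t(\alpha)$ from being smooth. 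I would handle this by splitting $[-L,L]$ into the region where $M_t=M_{t-1}$ (there $\eta_t$ is \emph{constant} in $\alpha$, since $\|g\|^2_{1:t}$ is fixed, which greatly simplifies matters) and the region where $M_t=\|g_{1:t}\|/p-\|g\|^2_{1:t}$ (there $\eta_t r(\alpha)=c\sqrt{p}\,\sqrt{r(\alpha)}$, a clean expression in $\alpha$), verifying the endpoint-maximization on each piece and checking continuity of the pieces at the kink so that no interior maximum can arise.
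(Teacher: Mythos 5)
Your reduction to a single variable is correct and is in fact a clean way of repackaging what the paper does: the paper applies Lagrange multipliers and computes that $\nabla\mathcal{L}_i = A g_{t,i} + B(g_{1:t-1})_i$, which is exactly the statement that, on the sphere $\|g_t\|=L$, the objective depends on $g_t$ only through $\alpha=\langle g_t, g_{1:t-1}/\|g_{1:t-1}\|\rangle$. Your bookkeeping of how $\|g_{1:t}\|$, $M_t$, $\eta_t$, $g_t\cdot w_t$, and $g_t\cdot w_{t+1}$ all descend to functions of $\alpha$ is accurate, including the observation that $\eta_t$ is constant in $\alpha$ on the branch $M_t=M_{t-1}$ and that $\eta_t\|g_{1:t}\|=c\sqrt{p}\sqrt{r(\alpha)}$ on the other branch.

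The gap is the decisive step: showing that $h(\alpha)$ is maximized at $\alpha=\pm L$. You propose to do this by proving $h$ is convex, but you give no argument beyond ``$f$ inherits monotonicity and convexity from $\psi$,'' and the claim is doubtful. For this algorithm $f(x)=e^{x}-1$ and $r(\alpha)=\sqrt{a^2+2a\alpha+L^2}$ is concave, so the relevant summands are compositions like $-\frac{\alpha a+L^2}{r(\alpha)}\bigl(e^{\eta r(\alpha)}-1\bigr)$ and $(\eta_{t-1}^{-1}-\eta_t^{-1})\,\psi\!\left(e^{\eta r(\alpha)}-1\right)$; a convex increasing function composed with a concave one is neither convex nor concave in general (e.g.\ $\frac{d^2}{dx^2}e^{c\sqrt{x}}$ changes sign at $c\sqrt{x}=1$), so convexity of each piece does not follow from the soft structural facts you cite and would have to be verified by explicit computation --- if it holds at all. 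There is a second, smaller hole: even if $h$ were convex on each of the two branches determined by the $\max$ in $M_t$, continuity at the kink does not exclude the maximum sitting \emph{at} the kink (a continuous piecewise-convex function such as $-|x|$ peaks at its kink), so your plan of ``checking continuity of the pieces'' does not finish the argument. The paper sidesteps both issues by arguing directly that the Lagrange condition forces colinearity at any critical point (implicitly, that $h'(\alpha)\neq 0$ in the interior); to complete your route you would need either that monotonicity/sign information on $h'$, or an honest convexity computation on each branch plus a separate comparison of $h$ at the kink against $h(\pm L)$.
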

\begin{proof}

The proof is an application of Lagrange multipliers. Our Lagrangian for $(\eta_{t-1}^{-1}-\eta_t^{-1})\psi(\|w_{t+1}\|)+g_t(w_t-w_{t+1})$ is
\begin{align*}
    \mathcal{L} &= (\eta_{t-1}^{-1}-\eta_t^{-1})\psi(\|w_{t+1}\|)+g_t(w_t-w_{t+1})+\lambda \|g_t\|^2/2\\
    &=(\eta_{t-1}^{-1}-\eta_t^{-1})\psi(f(\eta_t\|g_{1:t}\|))+g_t\left(w_t-\frac{g_{1:t}}{\|g_{1:t}\|}f(\eta_t\|g_{1:t}\|)\right)+\lambda \frac{\|g_t\|^2}{2}
\end{align*}

Fix a countable orthonormal basis of $W$. For a vector $v\in W$ we let $v_i$ be the projection of $v$ along the $i$th basis vector of our countable orthonormal basis. We denote the action of $\nabla \mathcal{L}$ on the $i$th basis vector by $\nabla\mathcal{L}_i$.

Then we have
\begin{align*}
    \nabla\mathcal{L}_i&=\lambda g_{t,i} + w_{t,i}-w_{t+1,i}-\frac{g_{t,i}}{\|g_{1:t}\|}f(\eta_t\|g_{1:t}\|)\\
    &\quad\quad+\sum_j \frac{g_{t,j}(g_{1:t})_j}{\|g_{1:t}\|^3}(g_{1:t})_i f(\eta_t\|g_{1:t}\|)\\
    &\quad\quad - \sum_j \frac{(g_{1:t})_jg_{t,j}}{\|g_{1:t}\|}f'(\eta_t\|g_{1:t}\|)\left[\frac{(g_{1:t})_i\eta_t}{\|g_{1:t}\|}-\frac{\|g_{1:t}\| c \left(\frac{\partial M_t}{\partial g_{t,i}}+2g_{t,i}\right)}{2(M_t+\|g\|^2_{1:t})^{3/2}}\right]\\
    &\quad\quad+(\eta_{t-1}^{-1}-\eta_t^{-1})\psi'(f(\eta_t\|g_{1:t}\|))f'(\eta_t\|g_{1:t}\|)\left[\frac{(g_{1:t})_i\eta_t}{\|g_{1:t}\|}-\frac{\|g_{1:t}\| c \left(\frac{\partial M_t}{\partial g_{t,i}}+2g_{t,i}\right)}{2(M_t+\|g\|^2_{1:t})^{3/2}}\right]\\
    &\quad\quad-\psi(f(\eta_t\|g_{1:t}\|))\frac{\frac{\partial M_t}{\partial g_{t,i}}+2g_{t,i}}{2c\sqrt{M_t+\|g\|^2_{1:t}}}\\
    &=\lambda g_{t,i}+w_{t,i}-w_{t+1,i}+Ag_{t,i}+B(g_{1:t-1})_i+C\frac{\partial M_{t}}{\partial g_{t,i}}
\end{align*}

where $A$, $B$ and $C$ do not depend on $i$. Since $w_{t,i}$ and $w_{t+1,i}$ are scalar multiples of $g_{1:t-1}$ and $g_{1:t}$ respectively, we can reassign the variables $A$ and $B$ to write
\begin{align*}
    \nabla\mathcal{L}_i&=Ag_{t,i}+B(g_{1:t-1})_i+C\frac{\partial M_{t}}{\partial g_{t,i}}
\end{align*}

Now we compute
\begin{align*}
    \frac{\partial M_{t}}{\partial g_{t,i}}&=\frac{\partial \max(M_{t-1},\|g_{1:t}\|/p-\|g\|^2_{1:t})}{\partial g_{t,i}}\\
    &=\left\{\begin{array}{lr}0&:M_t = M_{t-1}\\
    \frac{(g_{1:t})_i}{p\|g_{1:t}\|}-2g_{t,i}&:M_t\ne M_{t-1}
    \end{array}\right.
\end{align*}
Thus after again reassigning the variables $A$ and $B$ we have
\begin{align*}
    \nabla\mathcal{L}_i&=Ag_{t,i}+B(g_{1:t-1})_i
\end{align*}

Therefore we can only have $\nabla \mathcal{L}=0$ if $g_t$ is a scalar multiple of $g_{1:t-1}$ as desired.

For $g_t(w_t-w_{t+1})$, we apply exactly the same argument. The Lagrangian is 
\begin{align*}
    \mathcal{L} &= g_t(w_t-w_{t+1})+\lambda \|g_t\|^2/2\\
    &=g_t\left(w_t-\frac{g_{1:t}}{\|g_{1:t}\|}f(\eta_t\|g_{1:t}\|)\right)+\lambda \frac{\|g_t\|^2}{2}
\end{align*}

and differentiating we have

\begin{align*}
    \nabla\mathcal{L}_i&=\lambda g_{t,i} + w_{t,i}-w_{t+1,i}-\frac{g_{t,i}}{\|g_{1:t}\|}f(\eta_t\|g_{1:t}\|)\\
    &\quad\quad+\sum_j \frac{g_{t,j}(g_{1:t})_j}{\|g_{1:t}\|^3}(g_{1:t})_i f(\eta_t\|g_{1:t}\|)\\
    &\quad\quad - \sum_j \frac{(g_{1:t})_jg_{t,j}}{\|g_{1:t}\|}f'(\eta_t\|g_{1:t}\|)\left[\frac{(g_{1:t})_i\eta_t}{\|g_{1:t}\|}-\frac{\|g_{1:t}\| c \left(\frac{\partial M_t}{\partial g_{t,i}}+2g_{t,i}\right)}{2(M_t+\|g\|^2_{1:t})^{3/2}}\right]\\
    &=\lambda g_{t,i}+w_{t,i}-w_{t+1,i}+Ag_{t,i}+B(g_{1:t-1})_i+C\frac{\partial M_{t}}{\partial g_{t,i}}\\
    &=Ag_{t,i}+B(g_{1:t-1})_i
\end{align*}

so that again we are done.

\end{proof}

We make the following intuitive definition:
\begin{definition}
For any vector $v\in W$, define $\sign(v)=\frac{v}{\|v\|}$.
\end{definition}
In the next section, we prove bounds on the quantity $(\eta_{t-1}^{-1}-\eta_t^{-1})\psi(\|w_{t+1}\|)+g_t(w_t-w_{t+1})$. By Theorem \ref{thm:onedimensionreduction} this quantity is maximized when $\sign(g_t)=\pm \sign(g_{1:t-1})$ and so we consider only this case.

\subsection{One dimensional FTRL}\label{1danalysis}

In this section we analyze the regret of our FTRL algorithm with the end-goal of proving Lemma \ref{thm:firstregret}. We make heavy use of Theorem \ref{thm:onedimensionreduction} to allow us to consider only the case $\sign(g_t)=\pm\sign(g_{1:t-1})$. In this setting we may identify the 1-dimensional space spanned by $g_t$ and $g_{1:t-1}$ with $\R$. Thus whenever we are operating under the assumption $\sign(g_t)=\sign(g_{1:t-1})$ we will use $|\cdot|$ in place of $\|\cdot\|$ and occasionally assume $g_{1:t-1}>0$ as this holds WLOG. We feel that this notation and assumption aids intuition in visualizing the following results.

\begin{lemma}\label{thm:diffbound}
Suppose $\sign(g_t)=\sign(g_{1:t-1})$. Then
\begin{equation}\label{eqn:tightdiffbound}
|\eta_{t-1}\|g_{1:t-1}\|-\eta_t\|g_{1:t}\||\le \eta_t\|g_t\|
\end{equation}
Suppose instead that $\sign(g_t)=-\sign(g_{1:t-1})$ and also $\|g_t\|\le L$. Then we still have:
\begin{equation}\label{eqn:diffbound}
|\eta_{t-1}\|g_{1:t-1}\|-\eta_t\|g_{1:t}\||\le \left(1+\frac{pL}{2}\right)\eta_t\|g_t\|
\end{equation}
\end{lemma}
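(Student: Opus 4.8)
The plan is to reduce to the one-dimensional co-linear setting, record the structural facts about $\eta_t$, isolate a single ``workhorse'' inequality, and then split on the sign of $g_t$. Since the hypotheses force $\sign(g_t)=\pm\sign(g_{1:t-1})$, both $g_t$ and $g_{1:t-1}$ lie on one line; exactly as in Theorem~\ref{thm:onedimensionreduction} I would identify this line with $\R$, write $|\cdot|$ for $\|\cdot\|$, and take $g_{1:t-1}>0$ without loss of generality. Abbreviate $a_s=M_s+\|g\|^2_{1:s}$, so that $\eta_s=c/\sqrt{a_s}$ for a fixed $c>0$, and put $S_s=\eta_s|g_{1:s}|$; the two claims then read $|S_{t-1}-S_t|\le\eta_t|g_t|$ and $|S_{t-1}-S_t|\le(1+pL/2)\eta_t|g_t|$. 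From the definition of $M_t$ I will use three facts: (i) $a_s$ is non-decreasing, so $\eta_{t-1}\ge\eta_t$; (ii) the invariant $|g_{1:s}|\le p\,a_s$, which also gives $S_s\le c\sqrt{p|g_{1:s}|}$; and (iii) the recursion $a_t=\max(a_{t-1}+g_t^2,\ |g_{1:t}|/p)$.

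The common engine will be a bound on $(\eta_{t-1}-\eta_t)|g_{1:t-1}|$. Writing $(\eta_{t-1}-\eta_t)/\eta_t=\sqrt{a_t/a_{t-1}}-1=(\sqrt{a_t}-\sqrt{a_{t-1}})/\sqrt{a_{t-1}}$, then using $|g_{1:t-1}|\le p\,a_{t-1}$ and $\sqrt{a_t}-\sqrt{a_{t-1}}\le(a_t-a_{t-1})/(2\sqrt{a_{t-1}})$, I obtain
\[
(\eta_{t-1}-\eta_t)\,|g_{1:t-1}|\ \le\ \tfrac{p}{2}\,(a_t-a_{t-1})\,\eta_t .
\]
This is only useful when $a_t-a_{t-1}$ is small, so I would split on the recursion (iii). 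In \emph{Case A}, where the first argument attains the maximum and $a_t-a_{t-1}=g_t^2$, the bound $\|g_t\|\le L$ together with $p\le 2/L$ turns the right-hand side into $\tfrac{p}{2}g_t^2\,\eta_t\le\tfrac{pL}{2}\eta_t|g_t|$. In \emph{Case B}, where $a_t=|g_{1:t}|/p$, the increment may be large, so there I will argue directly from $S_s\le c\sqrt{p|g_{1:s}|}$ instead.

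For the aligned case ($\sign(g_t)=\sign(g_{1:t-1})$) I would exploit the cancellation between the two effects: since $|g_{1:t}|=|g_{1:t-1}|+|g_t|$, one has exactly $S_{t-1}-S_t=(\eta_{t-1}-\eta_t)|g_{1:t-1}|-\eta_t|g_t|$. Fact (i) makes the first term non-negative, which hands me the lower bound $S_{t-1}-S_t\ge-\eta_t|g_t|$ for free. For the upper bound: in Case A the workhorse gives $(\eta_{t-1}-\eta_t)|g_{1:t-1}|\le\tfrac{pL}{2}\eta_t|g_t|\le\eta_t|g_t|$ (using $pL\le2$), so $S_{t-1}-S_t\le0$; in Case B, $S_{t-1}\le c\sqrt{p|g_{1:t-1}|}\le c\sqrt{p|g_{1:t}|}=S_t$ because $|g_{1:t-1}|\le|g_{1:t}|$, so again $S_{t-1}-S_t\le0$. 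Either way $|S_{t-1}-S_t|\le\eta_t|g_t|$, giving (\ref{eqn:tightdiffbound}). Note this is where $\|g_t\|\le L$, i.e. $p\|g_t\|\le2$, is genuinely needed: dropping it breaks the tight constant $1$ in the aligned bound.

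For the anti-aligned case ($\sign(g_t)=-\sign(g_{1:t-1})$, $\|g_t\|\le L$) the two effects add rather than cancel, so I would instead use the triangle split $|S_{t-1}-S_t|\le(\eta_{t-1}-\eta_t)|g_{1:t-1}|+\eta_t\big||g_{1:t-1}|-|g_{1:t}|\big|$, where the reverse triangle inequality bounds the last factor by $|g_{1:t-1}-g_{1:t}|=|g_t|$. If $a_t=a_{t-1}+g_t^2$ (Case A) the workhorse controls the first term by $\tfrac{pL}{2}\eta_t|g_t|$, so the total is at most $(1+pL/2)\eta_t|g_t|$, which is (\ref{eqn:diffbound}). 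The only other possibility is Case B, which (since it requires $|g_{1:t}|>|g_{1:t-1}|$) can arise only when the running sum flips sign; then $|g_{1:t-1}|,|g_{1:t}|<|g_t|\le L$, and since $0\le S_{t-1},S_t\le c\sqrt{p|g_t|}$ I get $|S_{t-1}-S_t|\le c\sqrt{p|g_t|}$, which I would compare against $\eta_t|g_t|=c|g_t|\sqrt{p/|g_{1:t}|}$; this reduces to $|g_{1:t}|\le(1+pL/2)^2|g_t|$, true because $|g_{1:t}|<|g_t|$. The main obstacle is exactly this bookkeeping of the recursion (iii): the clean $\tfrac p2(a_t-a_{t-1})$ estimate must be quarantined to Case A, and the sign-flip Case B dispatched by the separate squeeze — getting these branches to dovetail (and recognizing that the tight aligned bound really does require $p\|g_t\|\le2$) is the crux.
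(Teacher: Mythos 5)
Your proof of the anti-aligned bound (\ref{eqn:diffbound}) is correct and takes a genuinely different route from the paper's. The paper splits on whether $\eta_t\|g_{1:t}\|\ge\eta_{t-1}\|g_{1:t-1}\|$, reduces the remaining case to $M_t=M_{t-1}$ and $g_t\ge-g_{1:t-1}$, and expands $\sqrt{X+g_t^2}\le\sqrt{X}+g_t^2/(2\sqrt{X})$ explicitly. You instead isolate a single workhorse estimate $(\eta_{t-1}-\eta_t)|g_{1:t-1}|\le\tfrac{p}{2}(a_t-a_{t-1})\,\eta_t$ (which I have checked: it follows from $|g_{1:t-1}|\le p\,a_{t-1}$ and concavity of the square root) and split on which branch of the $\max$ defines $a_t$; your dispatch of the sign-flip branch by the squeeze $0\le S_{t-1},S_t\le c\sqrt{p|g_t|}$ together with $|g_{1:t}|<|g_t|$ is a clean substitute for the paper's monotonicity-in-$|g_t|$ argument. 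Both proofs ultimately rest on the same structural facts about $M_t$, but yours is more systematic and makes it transparent exactly where the factor $pL/2$ enters.

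The substantive divergence is the aligned case: you establish (\ref{eqn:tightdiffbound}) only under the extra hypotheses $\|g_t\|\le L$ and $pL\le 2$, which the lemma does not assume for that case, and you assert these are genuinely needed. Strictly speaking you have therefore not proved the first claim as stated --- but your suspicion is well founded. The paper's two-line argument rests on the claim that $\eta_t g_{1:t}$ is increasing in $g_t$ because it is ``proportional to either $g_{1:t}$ or $\sqrt{g_{1:t}}$''; this overlooks the $g_t^2$ hidden inside $\|g\|^2_{1:t}$, and when $M_t=M_{t-1}$ and $p\|g_t\|$ is large the quantity $g_{1:t}/\sqrt{a_{t-1}+g_t^2}$ can decrease in $g_t$, so that (\ref{eqn:tightdiffbound}) as literally stated can fail (e.g.\ one hundred aligned gradients of norm $0.1$ with $p=10$ followed by an aligned $g_t=1$ gives a left-hand side of about $3\eta_t|g_t|$). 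In every downstream use with $t<T$ the paper does have $\|g_t\|\le L$ and $p\le 2/L$ in force, and for the bound-violating last gradient only the $(1+p\Lm/2)$-type bound is ever invoked --- which your workhorse also yields in the aligned case, since $S_{t-1}-S_t\in[-\eta_t|g_t|,(\tfrac{p}{2}\|g_t\|-1)\eta_t|g_t|]$ there. So your conditional statement is the defensible one; just state the aligned claim with the hypotheses you actually use rather than as the paper phrases it.
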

\begin{proof}
First, suppose $\sign(g_t)=\sign(g_{1:t-1})$. Then $\sign(g_{1:t})=\sign(g_{1:t-1})$. WLOG, assume $g_{1:t-1}>0$. Notice that $\eta_tg_{1:t}$ is an increasing function of $g_t$ for $g_t>0$ because $\eta_tg_{1:t}$ is proportional to either $g_{1:t}$ or $\sqrt{g_{1:t}}$ depending on whether $M_t=M_{t-1}$ or not. Then since $\eta_t<\eta_{t-1}$ we have 
\begin{align*}
    |\eta_{t-1}g_{1:t-1}-\eta_tg_{1:t}|&=\eta_tg_{1:t}-\eta_{t-1}g_{1:t-1}\\
    &\le \eta_tg_{1:t}-\eta_tg_{1:t-1}\\
    &=\eta_t|g_t|
\end{align*}
so that (\ref{eqn:tightdiffbound}) holds.

Now suppose $\sign(g_t)=-\sign(g_{1:t-1})$ and $\|g_t\|\le L$. We consider two cases. 

\noindent{\bf Case 1: $\eta_t|g_{1:t}|\ge \eta_{t-1}|g_{1:t-1}|$:} 

Since $\eta_{t-1}\ge \eta_t$, we have
\begin{align*}
    \eta_t|g_{1:t}|&\ge \eta_{t-1}|g_{1:t-1}|\\
    \eta_t|g_{1:t}|&\ge \eta_t |g_{1:t-1}|\\
    |g_{1:t}|&\ge |g_{1:t-1}|\\
    |g_t|&\ge |g_{1:t}|
\end{align*} where the last line follows since $\sign(g_{1:t-1})=-\sign(g_t)$. Therefore:
\begin{align*}
    |\eta_{t-1}|g_{1:t-1}|-\eta_t|g_{1:t}||\le \eta_t|g_{1:t}|\le \eta_t|g_t|
\end{align*}
so that we are done. 

\noindent{\bf Case 2: $\eta_t|g_{1:t}|\le \eta_{t-1}|g_{1:t-1}|$:} 

When $g_t<-g_{1:t-1}$ and $\eta_t|g_{1:t}|\le \eta_{t-1}|g_{1:t-1}|$,
$|\eta_{t-1}|g_{1:t-1}|-\eta_t|g_{1:t}||$ is a decreasing function of $|g_t|$ because $\eta_t|g_{t:1}|$ is an increasing function of $|g_t|$ for $g_t<-g_{1:t-1}$.
Therefore it suffices to consider the case $g_t\ge -g_{1:t-1}$, so that $\sign(g_{1:t})=\sign(g_{1:t-1})$ and $|g_{1:t}|\le |g_{1:t-1}|$:

Since $|g_{1:t}|\le |g_{1:t-1}|$, we have $M_t=M_{t-1}$ so that we can write:
\begin{align*}
    \eta_{t-1}g_{1:t-1}-\eta_tg_{1:t}&=-g_t\eta_t+g_{1:t-1}(\eta_{t-1}-\eta_t)\\
    &=|g_t|\eta_t+g_{1:t-1}\left(\frac{1}{k\sqrt{2}\sqrt{M_{t-1}+\|g\|^2_{1:t-1}}}-\frac{1}{k\sqrt{2}\sqrt{M_t+\|g\|^2_{1:t-1}+g_t^2}}\right)\\
    &=|g_t|\eta_t+\frac{g_{1:t-1}}{k\sqrt{2}}\left(\frac{1}{\sqrt{M_{t-1}+\|g\|^2_{1:t-1}}}-\frac{1}{\sqrt{M_{t-1}+\|g\|^2_{1:t-1}+g_t^2}}\right)\\
    &\le|g_t|\eta_t+\frac{g_{1:t-1}}{k\sqrt{2}\sqrt{M_{t}+\|g\|^2_{1:t-1}+g^2_t}}\left(\frac{\sqrt{M_{t-1}+\|g\|^2_{1:t-1}+g^2_t}}{\sqrt{M_{t-1}+\|g\|^2_{1:t-1}}}-1\right)\\
    &\le|g_t|\eta_t+g_{1:t-1}\eta_t\left(1+\frac{g^2_t}{2(M_{t-1}+\|g\|^2_{1:t-1})}-1\right)\\
    &\le|g_t|\eta_t+\eta_t\frac{g_{1:t-1}g_t^2}{2(M_{t-1}+\|g\|^2_{1:t-1})}\\
    &\le|g_t|\eta_t(1+\frac{pL}{2})
\end{align*}
we have used the identity $\sqrt{X+g_t^2}\le\sqrt{X}+\frac{g_t^2}{2\sqrt{X}}$ between lines 4 and 5, and
the last line follows because $|g_t|\le L$ and $M_{t-1}+\|g\|^2_{1:t-1}\ge |g_{1:t-1}|/p$.
\end{proof}

\begin{lemma}\label{thm:wtogbound}
If 
\[
\|w_T\|\ge \exp\left(\frac{\sqrt{pB}}{k\sqrt{2}}\right)-1
\]
then
\[
\|g_{1:T-1}\|\ge B
\]
\end{lemma}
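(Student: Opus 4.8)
The plan is to work directly from the closed-form FTRL update derived at the start of the appendix together with the defining property of $M_t$; no clever maneuvering is required. First I would recall that the update is $w_{t+1} = -\frac{g_{1:t}}{\|g_{1:t}\|}\left(\exp(\eta_t\|g_{1:t}\|)-1\right)$, so taking norms gives $\|w_{t+1}\| = \exp(\eta_t\|g_{1:t}\|)-1$. Setting $t=T-1$ yields the scalar identity
\[
\|w_T\| = \exp\!\left(\eta_{T-1}\|g_{1:T-1}\|\right)-1 .
\]
Because $\exp$ is strictly increasing, the hypothesis $\|w_T\|\ge \exp\!\left(\tfrac{\sqrt{pB}}{k\sqrt2}\right)-1$ is equivalent to the clean inequality $\eta_{T-1}\|g_{1:T-1}\| \ge \tfrac{\sqrt{pB}}{k\sqrt2}$, which removes the exponential and reduces the whole claim to a statement about $\eta_{T-1}\|g_{1:T-1}\|$.

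Next I would substitute the definition $\eta_{T-1} = \tfrac{1}{k\sqrt2\sqrt{M_{T-1}+\|g\|^2_{1:T-1}}}$ and invoke the recursively-maintained property $M_{T-1}+\|g\|^2_{1:T-1}\ge \|g_{1:T-1}\|/p$. This lower bound on the denominator converts into an \emph{upper} bound on the product,
\[
\eta_{T-1}\|g_{1:T-1}\| \;=\; \frac{\|g_{1:T-1}\|}{k\sqrt2\sqrt{M_{T-1}+\|g\|^2_{1:T-1}}} \;\le\; \frac{\|g_{1:T-1}\|}{k\sqrt2\sqrt{\|g_{1:T-1}\|/p}} \;=\; \frac{\sqrt{p\,\|g_{1:T-1}\|}}{k\sqrt2}.
\]
Chaining this against the inequality obtained from the hypothesis gives $\tfrac{\sqrt{pB}}{k\sqrt2}\le \eta_{T-1}\|g_{1:T-1}\|\le \tfrac{\sqrt{p\,\|g_{1:T-1}\|}}{k\sqrt2}$; cancelling the common factor $\tfrac{\sqrt p}{k\sqrt2}$ and squaring yields $B\le \|g_{1:T-1}\|$, which is exactly the conclusion.

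Honestly there is no hard step here: the argument is essentially a one-line computation once the update norm and the $M_t$ bound are in hand. The only points demanding care are index bookkeeping—that $w_T$ is governed by $\eta_{T-1}$ and $g_{1:T-1}$ rather than the time-$T$ quantities—and the recognition that the property $M_t+\|g\|^2_{1:t}\ge \|g_{1:t}\|/p$ is being used precisely to upper-bound $\eta_{T-1}\|g_{1:T-1}\|$, which is the correct direction for the implication. This lemma is clearly a small technical tool whose purpose is to translate a lower bound on $\|w_T\|$ into a lower bound on $\|g_{1:T-1}\|$ for use in the epoch-level regret analysis of Lemma \ref{thm:firstregret}.
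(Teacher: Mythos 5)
Your proof is correct and is essentially identical to the paper's: both use the closed-form $\|w_T\|+1=\exp(\eta_{T-1}\|g_{1:T-1}\|)$ together with the invariant $M_{T-1}+\|g\|^2_{1:T-1}\ge\|g_{1:T-1}\|/p$ to bound $\eta_{T-1}\|g_{1:T-1}\|\le\sqrt{p\|g_{1:T-1}\|}/(k\sqrt{2})$ and then compare exponents. No differences worth noting.
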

\begin{proof}
First note that by definition of $M_{T-1}$ and $\eta_{T-1}$, $\eta_{T-1}\|g_{1:T-1}\|\le \frac{\sqrt{p\|g_{1:T-1}\|}}{k\sqrt{2}}$. The proof now follows from some algebra:
\begin{align*}
     \exp\left(\frac{\sqrt{pB}}{k\sqrt{2}}\right)&\le \|w_T\|+1\\
     &=\exp(\eta_{T-1}\|g_{1:T-1}\|)\\
     &\le \exp\left(\frac{\sqrt{p\|g_{1:T-1}\|}}{k\sqrt{2}}\right)\\
\end{align*}
Taking squares of logs and rearranging now gives the desired inequality.
\end{proof}
We have the following immediate corollary:
\begin{corollary}\label{thm:wtosignbound}
Suppose $\sign(g_t)=\pm\sign(g_{1:t-1})$, $\|g_t\|\le L$, and 
\[
\|w_t\|\ge \exp\left(\frac{\sqrt{pL}}{k\sqrt{2}}\right)-1
\]
Then $\sign(g_{1:t})=\sign(g_{1:t-1})$.
\end{corollary}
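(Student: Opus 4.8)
The plan is to read off the conclusion almost directly from Lemma \ref{thm:wtogbound}. First I would instantiate that lemma with $B=L$ and with the time index $t$ playing the role of $T$. The hypothesis of the corollary is precisely $\|w_t\|\ge \exp\left(\frac{\sqrt{pL}}{k\sqrt{2}}\right)-1$, which is exactly the premise of Lemma \ref{thm:wtogbound} for this choice of $B$. The lemma therefore yields $\|g_{1:t-1}\|\ge L$. This single inequality, combined with the bound $\|g_t\|\le L$, is the engine of the whole argument: it tells us that the accumulated gradient $g_{1:t-1}$ already has norm at least as large as any single incoming gradient can have.

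Next I would use the colinearity assumption $\sign(g_t)=\pm\sign(g_{1:t-1})$ to reduce to one dimension, identifying the line spanned by $g_t$ and $g_{1:t-1}$ with $\R$ and writing $|\cdot|$ for $\|\cdot\|$, exactly as in the setup of Section \ref{1danalysis}. Without loss of generality take $g_{1:t-1}>0$. I would then split into two cases according to the sign of $g_t$. If $\sign(g_t)=\sign(g_{1:t-1})$, then $g_{1:t}=g_{1:t-1}+g_t$ is a sum of two quantities of the same sign, so $\sign(g_{1:t})=\sign(g_{1:t-1})$ trivially. If instead $\sign(g_t)=-\sign(g_{1:t-1})$, then $g_t<0$ while $g_{1:t-1}>0$, and $g_{1:t}=g_{1:t-1}+g_t=|g_{1:t-1}|-|g_t|\ge L-L\ge 0$, using $\|g_{1:t-1}\|\ge L$ from the first step and $\|g_t\|\le L$ from the hypothesis. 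Hence $g_{1:t}$ cannot flip past zero into the opposite half-line, and $\sign(g_{1:t})=\sign(g_{1:t-1})$.

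The main (and only) delicate point is the boundary case of the opposite-sign branch, where $g_{1:t}$ could in principle equal zero and leave $\sign(g_{1:t})$ undefined. I would note that this requires $\|g_{1:t-1}\|=L=\|g_t\|$ simultaneously with opposite orientations, a degenerate situation that is harmless for how the corollary is subsequently invoked; if one wishes to be fully rigorous, one can either adopt the convention that $\sign(0)$ inherits the direction of $g_{1:t-1}$, or observe that the strict inequality needed is available whenever the hypotheses are met non-degenerately. Everything else is immediate, which is why the statement is flagged as a corollary rather than a standalone lemma.
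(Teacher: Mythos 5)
Your proposal is correct and follows exactly the route the paper intends: the paper presents this as an ``immediate corollary'' of Lemma \ref{thm:wtogbound}, obtained by taking $B=L$ to get $\|g_{1:t-1}\|\ge L\ge\|g_t\|$ and then observing that a colinear increment of no larger norm cannot flip the sign of the running sum. Your extra remark about the degenerate boundary case $g_{1:t}=0$ is a fair point of care that the paper glosses over, but it does not change the argument.
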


Now we begin analysis of the sum term in (\ref{eqn:linear_regret}).

\begin{lemma}\label{thm:stability}
Suppose $\sign(g_{1:t})=\sign(g_{1:t-1})$ and $|g_t|\le L$. Then
\[
|w_t-w_{t+1}|\le |g_t|\eta_t(|w_{t+1}|+1)\left(1+\frac{pL}{2}\right)\exp\left[g_t\eta_t\left(1+\frac{pL}{2}\right)\right]
\]
\end{lemma}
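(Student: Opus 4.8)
The plan is to start from the closed-form FTRL iterate derived at the beginning of this appendix, namely $w_{t+1} = -\sign(g_{1:t})(\exp(\eta_t\|g_{1:t}\|)-1)$, together with its one-step-earlier analogue $w_t = -\sign(g_{1:t-1})(\exp(\eta_{t-1}\|g_{1:t-1}\|)-1)$. The hypothesis $\sign(g_{1:t})=\sign(g_{1:t-1})$ is exactly what lets me factor out a common sign vector and collapse the vector difference into a scalar difference of two exponentials. Writing $a = \eta_{t-1}|g_{1:t-1}|$ and $b = \eta_t|g_{1:t}|$ in the one-dimensional convention of this section, I get $|w_t-w_{t+1}| = |\exp(b)-\exp(a)|$. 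The crucial identification is that $|w_{t+1}|+1 = \exp(b)$, so the factor $(|w_{t+1}|+1)$ appearing in the target is literally $\exp(b)$.

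Next I would bound the difference of exponentials by the mean value theorem (equivalently, convexity of $\exp$): $|\exp(b)-\exp(a)| \le |b-a|\exp(\max(a,b))$. The linear factor $|b-a| = |\eta_{t-1}|g_{1:t-1}|-\eta_t|g_{1:t}||$ is precisely the quantity controlled by Lemma \ref{thm:diffbound}, which in both the same-sign and opposite-sign cases (using $|g_t|\le L$) gives $|b-a| \le (1+\tfrac{pL}{2})\eta_t|g_t|$. This supplies the prefactor $|g_t|\eta_t(1+\tfrac{pL}{2})$ in the claimed bound.

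The remaining work is to bound $\exp(\max(a,b))$ by $(|w_{t+1}|+1)\exp[\,\cdots\,]$, and this is where a case split lives and where the exponential correction in the statement originates. When $\sign(g_t)=\sign(g_{1:t-1})$, the computation inside the proof of Lemma \ref{thm:diffbound} already shows $b\ge a$, so $\max(a,b)=b$ and $\exp(\max(a,b)) = |w_{t+1}|+1$, with no correction needed. When $\sign(g_t)=-\sign(g_{1:t-1})$ (still consistent with $\sign(g_{1:t})=\sign(g_{1:t-1})$), one instead has $a>b$, so $\exp(\max(a,b)) = \exp(a) = \exp(b)\exp(a-b) = (|w_{t+1}|+1)\exp(a-b)$, and I would bound $a-b\le|a-b|\le(1+\tfrac{pL}{2})\eta_t|g_t|$ by invoking Lemma \ref{thm:diffbound} a \emph{second} time. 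Multiplying the three pieces then yields the stated inequality.

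The main obstacle I anticipate is exactly this double use of the difference bound in the opposite-sign case: Lemma \ref{thm:diffbound} must serve both to produce the linear prefactor $|g_t|\eta_t(1+\tfrac{pL}{2})$ and, through $\exp(a-b)$, the exponential factor $\exp[|g_t|\eta_t(1+\tfrac{pL}{2})]$. (I read the $g_t$ in the exponent of the statement as $|g_t|$, consistent with the one-dimensional convention adopted in this section; under that reading the estimate above closes the proof, and in the same-sign case the extra factors are harmlessly $\ge 1$.)
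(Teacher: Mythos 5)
Your proof is correct and follows essentially the same route as the paper: factor out $|w_{t+1}|+1=\exp(\eta_t|g_{1:t}|)$, bound the resulting difference of exponentials by convexity of $\exp$ (the mean value theorem), and invoke Lemma~\ref{thm:diffbound} twice---once for the linear prefactor and once inside the exponent. The only cosmetic difference is that the paper splits cases on whether $\eta_{t-1}|g_{1:t-1}|\ge\eta_t|g_{1:t}|$ rather than on $\sign(g_t)$, which sidesteps your (harmless) overstatement that opposite signs force $a>b$; in the residual sub-case $\max(a,b)=b$ and the bound holds without the exponential correction.
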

\begin{proof}
Since $\sign(g_{1:t})=\sign(g_{1:t-1})$, we have:
\begin{align*}
    |w_t-w_{t+1}|&=\left|\sign(g_{1:t-1})\left[\exp\left(\eta_{t-1}|g_{1:t-1}|\right)-1\right]-\left[\sign(g_{1:t})\exp\left(\eta_{t}|g_{1:t}|\right)-1\right]\right|\\
    &=\left|\exp\left(\eta_{t-1}|g_{1:t-1}|\right)-\exp\left(\eta_t|g_{1:t}|\right)\right|\\
    &=(|w_{t+1}|+1)\left|\exp\left(\eta_{t-1}|g_{1:t-1}|-\eta_t|g_{1:t}|\right)-1\right|
\end{align*}
where the last line uses the definition of $w_{t+1}$ to observe that $|w_{t+1}|+1=\exp(\eta_t|g_{1:t}|)$. Now we consider two cases: either $\eta_{t-1}|g_{1:t-1}|<\eta_t|g_{1:t}|$ or not. 

\noindent{\bf Case 1: $\eta_{t-1}|g_{1:t-1}|<\eta_t|g_{1:t}|$:} 

By convexity of $\exp$, we have
\begin{align*}
    |w_t-w_{t+1}|&\le (|w_{t+1}|+1)\left|\exp\left(\eta_{t-1}|g_{1:t-1}|-\eta_t|g_{1:t}|\right)-1\right|\\
    &\le (|w_{t+1}|+1)\left|\eta_{t-1}|g_{1:t-1}|-\eta_t|g_{1:t}|\right|\\
    &\le (|w_{t+1}|+1)\left(1+\frac{pL}{2}\right)\eta_t|g_t|
\end{align*}
so that the lemma holds.

\noindent{\bf Case 2: $\eta_{t-1}|g_{1:t-1}|\ge \eta_t|g_{1:t}|$:} 

Again by convexity of $\exp$ we have
\begin{align*}
    |w_t-w_{t+1}|&\le (|w_{t+1}|+1)\left|\exp\left(\eta_{t-1}|g_{1:t-1}|-\eta_t|g_{1:t}|\right)-1\right|\\
    &\le (|w_{t+1}|+1)\left|\eta_{t-1}|g_{1:t-1}|-\eta_t|g_{1:t}|\right|\exp\left(\eta_{t-1}|g_{1:t-1}|-\eta_t|g_{1:t}|\right)\\
    &\le (|w_{t+1}|+1)\left(1+\frac{pL}{2}\right)\exp\left[\eta_t|g_t|\left(1+\frac{pL}{2}\right)\right]\eta_t|g_t|
\end{align*}

so that the lemma still holds.
\end{proof}

The next lemma is the main workhorse of our regret bounds:
\begin{lemma}\label{thm:cancellingregret}
Suppose $\|g_t\|\le L$ and either of the following holds:
\begin{enumerate}
\item $p\le \frac{2}{L}$, $k=\sqrt{2}$, and $\|w_t\|\ge 15$. 
\item $k=\sqrt{2}$, $pL\ge 1$, and $\|w_t\|\ge 4\exp(p^2L^2)$.
\end{enumerate}
Then
\begin{equation}\label{eqn:balance_regret}
\left(\frac{1}{\eta_{t-1}}-\frac{1}{\eta_t}\right)\psi(w_{t+1})+g_t(w_t-w_{t+1})\le 0
\end{equation}
Further, inequality (\ref{eqn:balance_regret}) holds for any $k$ and sufficiently large $L$ if
$\|w_t\|\ge \exp((pL)^2)$.
\end{lemma}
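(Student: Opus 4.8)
The plan is to show the single summand in display (\ref{eqn:balance_regret}) is non-positive by playing its two pieces against each other: the first piece $\left(\frac{1}{\eta_{t-1}}-\frac{1}{\eta_t}\right)\psi(w_{t+1})$ is already non-positive (since $\eta_t$ is non-increasing and $\psi\ge 0$), so I want its \emph{magnitude} to dominate the possibly-positive piece $g_t(w_t-w_{t+1})$. First I would invoke Theorem \ref{thm:onedimensionreduction} to reduce to the collinear case $\sign(g_t)=\pm\sign(g_{1:t-1})$, and then use Corollary \ref{thm:wtosignbound} to argue that the hypotheses $\|w_t\|\ge 15$ (resp.\ $\|w_t\|\ge 4\exp(p^2L^2)$) force $\sign(g_{1:t})=\sign(g_{1:t-1})$, so that Lemma \ref{thm:stability} becomes available.

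Next come the two main estimates. For the first piece I would lower-bound its magnitude by $2g_t^2\eta_t\,\psi(w_{t+1})$: writing $\frac{1}{\eta_t}-\frac{1}{\eta_{t-1}} = k\sqrt 2\bigl(\sqrt{M_t+\|g\|^2_{1:t}}-\sqrt{M_{t-1}+\|g\|^2_{1:t-1}}\bigr)$ and using $M_t\ge M_{t-1}$, $\|g\|^2_{1:t}=\|g\|^2_{1:t-1}+g_t^2$, the concavity bound $\sqrt{X+g_t^2}-\sqrt X\ge g_t^2/(2\sqrt{X+g_t^2})$, and the identity $M_t+\|g\|^2_{1:t}=1/(2k^2\eta_t^2)$, gives $\frac{1}{\eta_t}-\frac{1}{\eta_{t-1}}\ge k^2 g_t^2\eta_t = 2g_t^2\eta_t$ when $k=\sqrt 2$. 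For the second piece, Lemma \ref{thm:stability} yields $g_t(w_t-w_{t+1})\le g_t^2\eta_t(|w_{t+1}|+1)(1+pL/2)\exp[g_t\eta_t(1+pL/2)]$. Dividing the target by $g_t^2\eta_t$ and substituting $\psi(w_{t+1})=(|w_{t+1}|+1)(s-1)+1$ with $s:=\log(|w_{t+1}|+1)=\eta_t|g_{1:t}|$, the whole statement collapses to the scalar inequality $(1+pL/2)\exp[g_t\eta_t(1+pL/2)]\le 2(s-1)+2/(|w_{t+1}|+1)$.

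The crux is then to show the left side is a slowly-growing quantity while $s$ is forced to be large. Using $M_t+\|g\|^2_{1:t}\ge\|g_{1:t}\|/p$ I would bound $\eta_t\le\frac{1}{k\sqrt 2}\sqrt{p/|g_{1:t}|}$, which combined with $s\le\frac12\sqrt{p|g_{1:t}|}$ (hence $|g_{1:t}|\ge 4s^2/p$) yields $\eta_t L\le pL/(4s)$; so the exponent $g_t\eta_t(1+pL/2)$ stays below a small constant and the exponential factor is at most a constant close to $1$. Finally I would substitute the thresholds: $\|w_t\|\ge 15$ forces $s\ge\log 16$ in case 1 (where $1+pL/2\le 2$), and $\|w_t\|\ge 4\exp(p^2L^2)$ forces $s\ge\log 4+(pL)^2$ in case 2; in both cases the dominant term $2(s-1)$ on the right outgrows the left. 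For the closing ``any $k$, sufficiently large $L$'' claim, the same computation leaves the factor $k^2$ multiplying $\psi(w_{t+1})$, so the right side grows like $k^2 s\ge k^2(pL)^2$ (quadratic in $pL$) while the left grows only like $1+pL/2$ (linear), and large $L$ makes the quadratic win.

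I expect the main obstacle to be the bookkeeping in the crux step: controlling the exponential stability factor by showing $\eta_t L$ is small, and ensuring the hypothesis on $\|w_t\|$ actually translates into a large enough $s=\log(\|w_{t+1}\|+1)$. The latter is clean in the same-sign subcase, where $|w_{t+1}|\ge|w_t|$ directly, but in the opposite-sign subcase one must instead invoke Lemma \ref{thm:wtogbound} to see that $\|w_t\|$ large forces $\|g_{1:t-1}\|$ (and hence $\|g_{1:t}\|$) large, keeping $s$ large; threading the prescribed constants $15$ and $4\exp(p^2L^2)$ through these estimates with enough slack is where the real care is needed.
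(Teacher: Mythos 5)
Your plan is essentially the paper's own proof: reduce to one dimension via Theorem~\ref{thm:onedimensionreduction}, bound the positive term with Lemma~\ref{thm:stability}, lower-bound $\eta_t^{-1}-\eta_{t-1}^{-1}$ by $k^2\eta_t g_t^2$, reduce to the scalar condition $k^2(\log(\|w_{t+1}\|+1)-1)\ge(1+\tfrac{pL}{2})\exp(\eta_t g_t(1+\tfrac{pL}{2}))$, control $\eta_t g_t$ by the size of the iterate, and verify the thresholds (which the paper does numerically via its condition (\ref{eqn:fullwcondition}) phrased in terms of $\|w_t\|$). The one caution is the point you already flag: at $\|w_t\|=15$, $pL=2$ the slack is only a few percent, so in the opposite-sign subcase you must use the bound on $\eta_t g_t$ coming from $\log(\|w_t\|+1)$ via Lemma~\ref{thm:wtogbound} (as the paper does in (\ref{eqn:etagbound})) rather than your weaker $\eta_t L\le pL/(4s)$ with $s=\log(\|w_{t+1}\|+1)$, which does not quite close at that boundary.
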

\begin{proof}
By Theorem \ref{thm:onedimensionreduction} it suffices to consider the case $\sign(g_t)=\pm \sign(g_{1:t-1})$, so that we may adopt our identification with $\R$ and use of $|\cdot|$ throughout this proof.

For $p\le \frac{2}{L}$, $k=\sqrt{2}$ we have $15>\exp(\frac{\sqrt{pL}}{k\sqrt{2}})-1$ and for sufficiently large $L$, $\exp((pL)^2)> \exp(\frac{\sqrt{pL}}{k\sqrt{2}})-1$. Therefore in all cases $|w_t|\ge \exp(\frac{\sqrt{pL}}{k\sqrt{2}})-1$ so that by Corollary \ref{thm:wtosignbound} and Lemma \ref{thm:stability} we have
\begin{align}\label{eqn:positivepart}
    g_t\cdot(w_t-w_{t+1})\le \eta_tg_t^2(|w_{t+1}|+1)\left(1+\frac{pL}{2}\right)\exp\left[\eta_tg_t\left(1+\frac{pL}{2}\right)\right]
\end{align}

First, we prove that (\ref{eqn:balance_regret}) is guaranteed if the following holds:
\begin{align}\label{eqn:wcondition}
|w_{t+1}|+1\ge\exp\left[\frac{1+\frac{pL}{2}}{k^2}\exp\left(\eta_tg_t\left(1+\frac{pL}{2}\right)\right)+1\right]
\end{align}
The previous line (\ref{eqn:wcondition}) is equivalent to:
\begin{align}\label{eqn:logwcondition}
    k^2(\log(|w_{t+1}|+1)-1)&\ge \left(1+\frac{pL}{2}\right)\exp\left(\eta_tg_t\left(1+\frac{pL}{2}\right)\right)
\end{align}
Notice that $\psi(w_{t+1})=(|w_{t+1}|+1)(\log(|w_{t+1}|+1)-1)+1\ge (|w_{t+1}|+1)(\log(|w_{t+1}|+1)-1)$. Then multiplying (\ref{eqn:logwcondition}) by $\eta_t|g_t|$ we have
\begin{align}\label{eqn:positivepart_bound}
    (|w_{t+1}|+1)\left(1+\frac{pL}{2}\right)\exp\left[\eta_t|g_t|\left(1+\frac{pL}{2}\right)\right]\eta_t|g_t|&\le k^2\eta_t|g_t|\psi(w_{t+1})
\end{align}

Combining (\ref{eqn:positivepart}) and (\ref{eqn:positivepart_bound}), we see that (\ref{eqn:wcondition}) implies
\begin{align*}
    g_t\cdot(w_t-w_{t+1})&\le k^2\eta_tg_t^2\psi(w_{t+1})
\end{align*}

Now we bound $\left(\frac{1}{\eta_{t-1}}-\frac{1}{\eta_t}\right)\psi(w_{t+1})$:
\begin{align*}
    \frac{1}{\eta_{t-1}}-\frac{1}{\eta_t}& = k\sqrt{2}\left(\sqrt{M_{t-1}+\|g\|^2_{1:t-1}}-\sqrt{M_t+\|g\|^2_{1:t-1}+g_t^2}\right)\\
    &\le k\sqrt{2}\left(\left[\sqrt{M_t+\|g\|^2_{1:t-1}+g_t^2}-\frac{g_t^2+M_t-M_{t-1}}{2\sqrt{M_t+\|g\|^2_{1:t-1}+g_t^2}}\right]-\sqrt{M_t+\|g\|^2_{1:t-1}+g_t^2}\right)\\
    &\le -k\sqrt{2}\frac{g_t^2}{2\sqrt{M_t+\|g\|^2_{1:t}}}\\
    &= -k^2\eta_tg_t^2
\end{align*}

Thus when (\ref{eqn:wcondition}) holds we have
\begin{align*}
    \left(\frac{1}{\eta_{t-1}}-\frac{1}{\eta_t}\right)\psi(w_{t+1})+g_t(w_t-w_{t+1})&\le-k^2\eta_tg_t^2\psi(w_{t+1})+k^2\eta_t^2g_t^2\psi(w_{t+1})\le 0
\end{align*}

Therefore our objective is to show that our conditions on $w_t$ imply the condition (\ref{eqn:wcondition}) on $w_{t+1}$.

First, we bound $\eta_tg_t$ in terms of $|w_{t}|$. Notice that 
\begin{align*}
    |w_t|+1 &= \exp\left(\frac{|g_{1:t-1}|}{k\sqrt{2}\sqrt{M_{t-1}+\|g\|^2_{1:t-1}}}\right)\\
    &\le \exp\left(\frac{\sqrt{p}\sqrt{|g_{1:t-1}}}{k\sqrt{2}}\right)\\
    \frac{2k^2\log^2(|w_t|+1)}{p}&\le |g_{1:t-1}|
\end{align*}
Using this we have:
\begin{align*}
\eta_tg_t&=\frac{g_t}{k\sqrt{2}\sqrt{M_t+\|g\|^2_{1:t}}}\\
&\le \frac{g_t}{k\sqrt{2}\sqrt{M_{t-1}+\|g\|^2_{1:t-1}+g^2_t}}\\
&\le \frac{g_t\sqrt{p}}{k\sqrt{2}\sqrt{|g_{1:t-1}|+pg^2_t}}\\
&\le \frac{L \sqrt{p}}{k\sqrt{2}\sqrt{\frac{2k^2}{p}\log^2(|w_{t}|+1)+pL^2}}\\
\end{align*}

so that we can conclude:
\begin{align}\label{eqn:etagbound}
\eta_tg_t&\le \frac{Lp}{k\sqrt{2}\sqrt{2k^2\log^2(|w_t|+1)+p^2L^2}}
\end{align}

Further, by Lemma \ref{thm:diffbound} we have 

\begin{align*}
    \frac{|w_{t}|+1}{|w_{t+1}|+1}&=\exp(\eta_{t-1}|g_{1:t-1}|-\eta_t|g_{1:t}|)\\
    &\le \exp\left[\eta_tg_t\left(1+\frac{pL}{2}\right)\right]
\end{align*}

Therefore we have
\begin{align}\label{eqn:wt-to-wtplus1}
    |w_{t+1}|+1\ge (|w_t|+1)\exp\left[-\eta_tg_t\left(1+\frac{pL}{2}\right)\right]
\end{align}
From (\ref{eqn:wt-to-wtplus1}), we see that (\ref{eqn:wcondition}) is guaranteed if we have
\begin{align}\label{eqn:fullwcondition}
|w_t|+1 \ge \exp\left[\eta_tg_t\left(1+\frac{pL}{2}\right)\right]\exp\left[\frac{1+\frac{pL}{2}}{k^2}\exp\left(\eta_tg_t\left(1+\frac{pL}{2}\right)\right)+1\right]
\end{align}

If we use our expression (\ref{eqn:etagbound}) in (\ref{eqn:fullwcondition}), and assume $|w_t|\ge \exp(L^2)$, we see that there exists some constant $C$ depending on $p$ and $k$ such that the RHS of (\ref{eqn:fullwcondition}) is $O(\exp(L))$ and so (\ref{eqn:fullwcondition}) holds for sufficiently large $L$. 

For $p=2/L$, $k=\sqrt{2}$, and $w_t\ge 15$ we can verify (\ref{eqn:fullwcondition}) numerically by plugging in the bound (\ref{eqn:etagbound}). 

For the case $k=\sqrt{2}$, $|w_t|\ge 4\exp(p^2L^2)$, we notice that by using (\ref{eqn:etagbound}), we can write (\ref{eqn:fullwcondition}) entirely in terms of $pL$. Graphing both sides numerically as functions of $pL$ then allows us to verify the condition.

\end{proof}

We have one final lemma we need before we can start stating some real regret bounds. This lemma can be viewed as observing that $\psi(w)$ is roughly $\frac{1}{D}$ strongly-convex for $|w|$ not much bigger than $D$. 
\begin{lemma}\label{thm:smallw}
Suppose $p\le2/L$, $k=\sqrt{2}$, $\|w_t\|\le D$ and $\|g_t\|\le L$ Then $g_t(w_t-w_{t+1})\le 6(\max(D+1,\exp(1/2)))g_t^2\eta_t$.
\end{lemma}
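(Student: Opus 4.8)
The plan is to reduce to the one-dimensional, co-linear case and then split on whether the incoming gradient preserves or flips the sign of the running sum. By Theorem \ref{thm:onedimensionreduction}, $g_t(w_t-w_{t+1})$ is maximized when $\sign(g_t)=\pm\sign(g_{1:t-1})$, so I may identify the relevant subspace with $\R$ and write $|\cdot|$ for $\|\cdot\|$. Throughout I will use the two elementary facts that, with $k=\sqrt2$, $\eta_t|g_t|\le\tfrac12$ (since $M_t+\|g\|^2_{1:t}\ge g_t^2$) and $pL\le2$, together with the identity $\exp(x)-1\le x\exp(x)$ for $x\ge0$ and the observation $a:=\eta_{t-1}|g_{1:t-1}|=\log(|w_t|+1)\le\log(D+1)$. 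Writing $b:=\eta_t|g_{1:t}|=\log(|w_{t+1}|+1)$, the whole argument rests on controlling how $a$ and $b$ differ, for which Lemma \ref{thm:diffbound} is exactly the right tool.

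\emph{Case 1 (the running sum keeps its sign, $\sign(g_{1:t})=\sign(g_{1:t-1})$).} Here $w_t$ and $w_{t+1}$ point the same way, so $|w_t-w_{t+1}|=|e^a-e^b|\le|a-b|\,e^{\max(a,b)}$. Lemma \ref{thm:diffbound} gives $|a-b|\le(1+\tfrac{pL}{2})\eta_t|g_t|\le2\eta_t|g_t|\le1$, hence $\max(a,b)\le a+1\le\log(D+1)+1$ and $e^{\max(a,b)}\le(D+1)e$. Combining these and using $g_t(w_t-w_{t+1})\le|g_t|\,|w_t-w_{t+1}|$ yields $g_t(w_t-w_{t+1})\le2e\,(D+1)\,g_t^2\eta_t$, and $2e<6$.

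\emph{Case 2 (the running sum flips, $\sign(g_{1:t})=-\sign(g_{1:t-1})$).} This forces $\sign(g_t)=-\sign(g_{1:t-1})$ with $|g_t|>|g_{1:t-1}|$, so that $|g_{1:t-1}|<|g_t|$ and $|g_{1:t}|=|g_t|-|g_{1:t-1}|<|g_t|$. Now $w_t$ and $w_{t+1}$ are antiparallel, so there is no cancellation and $g_t(w_t-w_{t+1})=|g_t|(|w_t|+|w_{t+1}|)$. The $w_{t+1}$ term is easy: $|w_{t+1}|=e^{b}-1\le b\,e^{b}\le\eta_t|g_{1:t}|\,e^{1/2}\le\eta_t|g_t|\,e^{1/2}$, so $|g_t|\,|w_{t+1}|\le e^{1/2}g_t^2\eta_t$. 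For the $w_t$ term I use $|w_t|=e^a-1\le a\,e^a\le\eta_{t-1}|g_{1:t-1}|(D+1)$, so it remains to compare $\eta_{t-1}|g_{1:t-1}|$ with $|g_t|\eta_t$.

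This last comparison is the crux of the proof. The danger is that the algorithm may have accumulated a large $\|g\|^2_{1:t}$, making $\eta_t$ tiny while $|w_t|\le D$ is still order one, so the naive bound $|w_t|\le D$ fails and I must instead show the step-size ratio cannot blow up. Writing $M_t+\|g\|^2_{1:t}=\max\!\big(M_{t-1}+\|g\|^2_{1:t-1}+g_t^2,\ \|g_{1:t}\|/p\big)$ and using the defining property $M_{t-1}+\|g\|^2_{1:t-1}\ge|g_{1:t-1}|/p$, I expect to bound $\dfrac{\eta_{t-1}|g_{1:t-1}|}{|g_t|\eta_t}=\dfrac{|g_{1:t-1}|}{|g_t|}\sqrt{\dfrac{M_t+\|g\|^2_{1:t}}{M_{t-1}+\|g\|^2_{1:t-1}}}$ by a constant: when the max is attained by $\|g_{1:t}\|/p$ the ratio is at most $\sqrt{|g_{1:t-1}|\,|g_{1:t}|}/|g_t|<1$, and otherwise it is at most $r\sqrt{1+p|g_t|/r}\le\sqrt{r^2+2r}\le\sqrt3$ with $r=|g_{1:t-1}|/|g_t|\in(0,1)$ and $p|g_t|\le2$. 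Thus $|g_t|\,|w_t|\le\sqrt3\,(D+1)\,g_t^2\eta_t$, and adding the $w_{t+1}$ term gives $g_t(w_t-w_{t+1})\le(\sqrt3+1)\max(D+1,e^{1/2})\,g_t^2\eta_t$. Since $\sqrt3+1<6$ and $2e<6$, both cases fall under the claimed bound $6\max(D+1,e^{1/2})g_t^2\eta_t$; the $e^{1/2}$ inside the maximum is exactly what absorbs the $w_{t+1}$ contribution and the degenerate $t=1$ step where $w_t=0$.
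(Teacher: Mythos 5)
Your proof is correct, and its skeleton — reduce to one dimension via Theorem \ref{thm:onedimensionreduction}, split on whether $g_{1:t}$ keeps or flips the sign of $g_{1:t-1}$, use $\eta_t|g_t|\le 1/2$, and let $e^{1/2}$ absorb the $w_{t+1}$ contribution in the flip case — coincides with the paper's. Case 1 is essentially identical (both reduce to $|a-b|\le 2\eta_t|g_t|$ via Lemma \ref{thm:diffbound} and pick up a factor $2e<6$). Where you genuinely diverge is the sign-flip case. The paper handles it in one stroke: $|w_t-w_{t+1}|\le 2\max(|w_t|,|w_{t+1}|)\le 2Xe^X$ with $X=\max(\eta_t|g_{1:t}|,\eta_{t-1}|g_{1:t-1}|)$, and then invokes the second half of Lemma \ref{thm:diffbound} (which covers $\sign(g_t)=-\sign(g_{1:t-1})$) to get $X\le 2\eta_t|g_t|+\eta_t|g_{1:t}|\le 3\eta_t|g_t|$, yielding the constant $6$ exactly. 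You instead split $|w_t|+|w_{t+1}|$ and, for the $|w_t|$ piece, bound the step-size ratio $\eta_{t-1}/\eta_t$ directly from the definition of $M_t$ and the invariant $M_{t-1}+\|g\|^2_{1:t-1}\ge |g_{1:t-1}|/p$, getting $\eta_{t-1}|g_{1:t-1}|\le\sqrt{3}\,\eta_t|g_t|$ and the slightly better constant $\sqrt{3}+1$. What your route buys is independence from Lemma \ref{thm:diffbound} in the flipped case and a transparent view of exactly why the step-size ratio cannot blow up (the $|g_{1:t}|/p$ branch of the max is self-limiting, and the other branch is controlled by $pL\le 2$); what the paper's route buys is brevity and reuse of a lemma it needs elsewhere anyway. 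Both land comfortably under the stated constant $6\max(D+1,e^{1/2})$, and your handling of the degenerate $t=1$ step is consistent with the paper's.
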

\begin{proof}
By Theorem \ref{thm:onedimensionreduction} it suffices to consider $\sign(g_t)=\pm \sign(g_{1:t-1})$.

We show that $|w_t-w_{t+1}|\le 6(\max(D+1,\exp(1/2)))|g_t|\eta_t$ so that the result follows by multiplying by $|g_t|$.

From Lemma \ref{thm:diffbound}, we have $|\eta_{t-1}|g_{1:t-1}|-\eta_t|g_{1:t}||\le \eta_t|g_t|\left(1+\frac{pL}{2}\right)\le 2\eta_t|g_t|$. Further, note that $\eta_t|g_t|\le \frac{1}{k\sqrt{2}}=\frac{1}{2}$. We consider two cases, either $\sign(g_{1:t})=\sign(g_{1:t-1})$ or not.

\noindent{\bf Case 1: $\sign(g_{1:t})=\sign(g_{1:t-1})$:}
\begin{align*}
    |w_t-w_{t+1}|&=|\exp(\eta_{t-1}|g_{1:t-1}|)-\exp(\eta_t|g_{1:t}|)|\\
    &=(|w_t|+1)|\exp(\eta_t|g_{1:t}|-\eta_{t-1}|g_{1:t-1}|)-1|\\
    &\le 2(D+1)\eta_t|g_t|\exp(2\eta_t|g_t|)\\
    &\le 2(D+1)\eta_t|g_t|\exp\left(\frac{2}{k\sqrt{2}}\right)\\
    &\le 6(D+1)\eta_t|g_t|
\end{align*}

\noindent{\bf Case 2: $\sign(g_{1:t})\ne\sign(g_{1:t-1})$:} In this case, we must have $|g_{1:t}|\le |g_t|$.
Let $X = \max(\eta_t|g_{1:t}|,\eta_{t-1}|g_{1:t-1}|)$. Then by triangle inequality we have
\begin{align*}
|w_t-w_{t+1}|&\le 2\max(|w_t|,|w_{t+1}|)\\
&\le 2(\exp(X)-1)\\
&\le2X\exp(X)\\
&\le2(\max(|w_t|,|w_{t+1}|)+1)X
\end{align*}

Since $|\eta_{t-1}|g_{1:t-1}|-\eta_t|g_{1:t}||\le 2\eta_tg_t$, we have $X\le 2\eta_tg_t+\eta_t|g_{1:t}|\le 3\eta_t|g_t|$ so that we have
\begin{align*}
|w_t-w_{t+1}|&\le 6(\max(|w_t|,|w_{t+1}|)+1)\eta_t|g_t|
\end{align*}
Finally, we have $|w_{t+1}|+1=\exp(\eta_t|g_{1:t}|)\le \exp(\eta_t|g_t|)\le \exp(1/2)$, so that
\begin{align*}
    |w_t-w_{t+1}|&\le 6\eta_t|g_t|(\max(|w_t|,|w_{t+1}|)+1)\\
    &\le 6\max(D+1,\exp(1/2))\eta_t|g_t|
\end{align*}

\end{proof}

Now we are finally in a position to prove Lemma \ref{thm:firstregret}, which we re-state below:

\firstregret*

\begin{proof}[Proof of Lemma \ref{thm:firstregret}]

We combine Lemma \ref{thm:cancellingregret} with Lemma \ref{thm:smallw}: if $|w_t|\ge 15$ we have for all $t<T$:
\[
\left(\frac{1}{\eta_{t-1}}-\frac{1}{\eta_t}\right)\psi(w_{t+1})+g_t\cdot(w_t-w_{t+1}) <0
\]
and if $|w_t|\le 15$ we have
\begin{align*}
    \left(\frac{1}{\eta_{t-1}}-\frac{1}{\eta_t}\right)\psi(w_{t+1})+g_t\cdot(w_t-w_{t+1})&\le g_t\cdot(w_t-w_{t+1})\\
    &\le 6\times(15+1)\eta_tg_t^2\\
    &=96\eta_tg_t^2
\end{align*}
Therefore for all $t<T$ we have $ \left(\frac{1}{\eta_{t-1}}-\frac{1}{\eta_t}\right)\psi(w_{t+1})+g_t\cdot(w_t-w_{t+1})\le 96\eta_tg_t^2$.

\begin{align*}
    R_T(u)&\le \psi(u)/\eta_T+\sum_{t=1}^{T}\left(\frac{1}{\eta_{t-1}}-\frac{1}{\eta_t}\right)\psi(w_{t+1})+g_t\cdot(w_t-w_{t+1})\\
    &\le \psi(u)/\eta_T+96\sum_{t=1}^{T}\eta_tg_t^2+\left(\frac{1}{\eta_{T-1}}-\frac{1}{\eta_{T}}\right)\psi(w_{T+1})+g_T\cdot(w_T-w_{T+1})\\
\end{align*}

We have
\[
\left(\frac{1}{\eta_{T-1}}-\frac{1}{\eta_{T}}\right)\psi(w_{T+1})<0
\]
so that
\[
\left(\frac{1}{\eta_{T-1}}-\frac{1}{\eta_{T}}\right)\psi(w_{T+1})+g_T\cdot(w_T-w_{T+1})\le 2\Lm W_{\max}
\]

Further, again using Lemma \ref{thm:cancellingregret} we have
\begin{align*}
    \left(\frac{1}{\eta_{T-1}}-\frac{1}{\eta_{T}}\right)\psi(w_{T+1})+g_T\cdot(w_T-w_{T+1})<0
\end{align*}
for $|w_T|\ge 4\exp(p^2\Lm^2)$ since $k=\sqrt{2}$. 

Finally, notice that by definition of $\eta_t$ and $L$, we must have $|\eta_t g_{1:t}|\le \frac{\sqrt{p|g_{1:t}|}}{k\sqrt{2}}\le \sqrt{T/2}$, so that $\|w_t\|\le \exp\left(\eta_t|g_{1:t}|\right)\le \exp\left(\sqrt{T/2}\right)$. Thus we have
\[
\left(\frac{1}{\eta_{T-1}}-\frac{1}{\eta_{T}}\right)\psi(w_{T+1})+g_T\cdot(w_T-w_{T+1})\le 2\Lm \min(W_{\max},4\exp(4\Lm^2/L^2),\exp(\sqrt{2T}))
\]

Now we make the following classic argument:
\begin{align*}
    \sqrt{M_t+\|g\|^2_{1:t}}-\sqrt{M_{t-1}+\|g\|^2_{1:t-1}}&\ge \frac{g_t^2+M_t-M_{t-1}}{2\sqrt{M_t+\|g\|^2_{1:t}}}\\
    &\ge\frac{g_t^2}{2\sqrt{M_t+\|g\|^2_{1:t}}}\\
    &=\eta_tg_t^2
\end{align*}
so that we can bound:
\begin{align*}
    R_T(u)&\le  \psi(u)/\eta_T+96\sum_{t=1}^T\eta_tg_t^2+\left(\frac{1}{\eta_{T-1}}-\frac{1}{\eta_{T}}\right)\psi(w_{T+1})+g_T\cdot(w_T-w_{T+1})\\
    &\le \psi(u)/\eta_T+96\sqrt{M_T+\|g\|^2_{1:T}}+2\Lm \min(W_{\max},4\exp(4\Lm^2/L^2),\exp(\sqrt{2T}))
\end{align*}

To show the remaining two lines of the theorem, we prove by induction that $M_t+\|g\|^2_{1:t}\le L\sum_{t'=1}^t |g_{t'}|$ for all $t<T$. The statement is clearly true for $t=1$. Suppose it holds for some $t$. Then notice that $|g_{1:t+1}|\le |g_{t+1}|+|g_{1:t}|$. So we have
\begin{align*}
   M_{t+1}+\|g\|^2_{1:t+1}&=\max\left(M_{t}+\|g\|^2_{1:t+1}, \frac{|g_{1:t+1}|}{p}\right)\\
   &\le \max\left(M_{t}+\|g\|^2_{1:t}+L|g_{t+1}|,L|g_{1:t+1}|\right)\\
   &\le L\sum_{t'=1}^{t+1} |g_{t'}|
\end{align*}

Finally, we observe that $M_T = \max\left(M_{T-1}+\|g\|^2_{1:T-1}+g^2_T, \frac{|g_{1:T}|}{p}\right)\le \Lm^2+L\sum_{t=1}^{T-1} |g_{t'}|$ and the last two lines of the theorem follow immediately.

\end{proof}

\section{Additional Experimental Details}

\subsection{Hyperparameter Optimization}
For the linear classification tasks, we optimized hyperparameters in a two-step process. First, we tested every power of $10$ from $10^{-5}$ to $10^2$. Second, if $\lambda$ was the best hyperparameter setting in step 1, we additionally tested $\beta\lambda$ for $\beta\in\{0.2,0.4,0.8,2.0,4.0,6.0,8.0\}$

For the neural network models, we optimized \adam\ and \adagrad's learning rates by testing every power of $10$ from $10^{-5}$ to $10^0$. For stochastic gradient descent, we used an exponentially decaying learning rate schedule specified in Tensorflow's (\url{https://www.tensorflow.org/}) MNIST and CIFAR-10 example code.

\subsection{Coordinate-wise updates}

We proved all our results in arbitrarily many dimensions, leading to a dimension-independent regret bound. However, it is also possible to achieve dimension-dependent bounds by running an independent version of our algorithm on each coordinate. Formally, for OLO we have
\begin{align*}
    R_T(u) = \sum_{t=1}^T g_t(w_t-u)=\sum_{i=1}^d\sum_{t=1}^Tg_{t,i}(w_{t,i}-u_i)=\sum_{i=1}^d R^1_{T}(u_i)
\end{align*}
where $R^1_T$ is the regret of a 1-dimensional instance of the algorithm. This reduction can yield substantially better regret bounds when the gradients $g_t$ are known to be sparse (but can be much worse when they are not). We use this coordinate-wise update strategy for our linear classification experiments for \algname. We also considered coordinate-wise updates and non-coordinate wise updates for the other algorithms, taking the best-performing of the two.

For all algorithms in the linear classification experiments, we found that the difference between coordinate-wise and non-coordinate wise updates was not very striking. However, for the neural network experiments we found \algname\ performed extremely poorly when using coordinate-wise updates, and performed extremely well with non-coordinate wise updates. We hypothesize that this is due to a combination of non-convexity of the model and frequent resets at different times for each coordinate.

\subsection{Re-centering \algname}

For the non-convex neural network tasks we used a variant of \algname\ in which we re-center our FTRL algorithm at the beginning of each epoch. Formally, the pseudo-code is provided below:

\begin{algorithm}
   \caption{Re-centered \algname}
   \label{alg:recenteredalgname}
\begin{algorithmic}
   \STATE {\bfseries Initialize:} $k\gets\sqrt{2}$, $M_0\gets 0$, $w_1\gets 0$, $t_\star\gets 1$ , $w_\star\gets 0$
   \FOR{$t=1$ {\bfseries to} $T$}
   \STATE Play $w_t$, receive subgradient $g_t\in \partial \ell_t(w_t)$.
   \IF{$t=1$}
   \STATE $L_{1}\gets \|g_{1}\|$
    \STATE $p\gets1/L_1$  
   \ENDIF
   \STATE $M_t \gets \max(M_{t-1},\|g_{t_\star:t}\|/p-\|g\|^2_{t_\star:t})$.
   \STATE $\eta_t\gets \frac{1}{k\sqrt{2(M_t+\|g\|^2_{t_\star:t})}}$
   \STATE $w_{t+1} \gets w_\star+\argmin_w\left[\frac{\psi(w)}{\eta_t}+g_{t_\star:t}w\right]=w_\star-\frac{g_{t_\star:t}}{\|g_{t_\star:t}\|}\left[\exp(\eta_t\|g_{t_\star:t}\|)-1\right]$
   \IF{$\|g_{t}\|>2L_{t}$}
   \STATE $L_{t+1}\gets \|g_{t}\|$
   \STATE $p\gets1/L_{t+1}$  
   \STATE $t_\star\gets t+1$
   \STATE $M_t\gets 0$
   \STATE $w_{t+1}\gets0$
   \STATE $w_\star \gets w_{t-1}$

   \ELSE
   \STATE $L_{t+1}\gets L_{t}$
   \ENDIF
   
   \ENDFOR
\end{algorithmic}
\end{algorithm}

So long as $\|w_\star-u\|\le \|u\|$, this algorithm maintains the same regret bound as the non-re-centered version of \algname. While it is intuitively reasonable to expect this to occur in a stochastic setting, an adversary can easily subvert this algorithm.

\subsection{Aggregating Studies}
It is difficult to interpret the results of a study such as our linear classification experiments (see Section \ref{sec:experiments}) in which no particular algorithm is always the ``winner'' for every dataset. In particular, consider the case of an analyst who wishes to run one of these algorithms on some new dataset, and doesn't have the either the resources or inclination to implement and tune each algorithm. Which should she choose? We suggest the following heuristic: pick the algorithm with the lowest loss averaged \emph{across datasets}.

This heuristic is problematic because datasets in which all algorithms do very poorly will dominate the cross-dataset average. In order address this issue and compare losses across datasets properly, we compute a \emph{normalized loss} for each algorithm and dataset. The normalized loss for an algorithm on a dataset is given by taking the loss experienced by the algorithm on its best hyperparameter setting on that dataset divided by the lowest loss observed by any algorithm and hyperparameter setting on that dataset. Thus a normalized loss of 1 on a dataset indicates that an algorithm outperformed all other algorithms on the dataset (at least for its best hyperparameter setting). We then average the normalized loss for each algorithm across datasets to obtain the scores for each algorithm (see Table \ref{tbl:normalizedloss}).

\begin{table}
\centering
\begin{tabular}{c|c|c|c|c|c}
\adagrad&\algname&\adadelta&\scaleinvariant&\adam&\pistol\\
\hline
1.14&1.19&1.21&1.28&1.51&1.53
\end{tabular}
\vskip2pt
\caption{Average normalized loss, using best hyperparameter setting for each algorithm.}
\label{tbl:normalizedloss}
\end{table}

These data indicate that while \adagrad\ has a slight edge after tuning, \algname\ and \adadelta\ do nearly equivalently well (4\% and 6\% worse performance, respectively). Therefore we suggest that if our intrepid analyst is willing to perform some hyperparameter tuning, then \adagrad\ may be slightly better, but her choice doesn't matter too much. On the other hand, using \algname\ will allow her to skip any tuning step without compromising performance.

\end{document}